\DeclareMathOperator*{\argmin}{arg\,min}
\newcommand{\norm}[1]{\lVert#1\rVert}
\newtheorem{proposition}{Proposition}
\newtheorem{axiom}{Axiom}
\newtheorem{corollary}{Corollary}
\newtheorem{claim}[axiom]{Claim}
\newtheorem{theorem}{Theorem}
\newtheorem{lemma}[theorem]{Lemma}
\newenvironment{customthm}[1]
  {\innercustomthm}
  {\endinnercustomthm}
\theoremstyle{remark}
\begin{document}

\begin{frontmatter}
\title{Multi-study Boosting: Theoretical Considerations for Merging vs. Ensembling}
%\title{A sample article title with some additional note\thanksref{t1}}
\runtitle{Multi-study Boosting: Merging vs. Ensembling}
%\thankstext{T1}{A sample additional note to the title.}

\begin{aug}
%%%%%%%%%%%%%%%%%%%%%%%%%%%%%%%%%%%%%%%%%%%%%%
%%Only one address is permitted per author. %%
%%Only division, organization and e-mail is %%
%%included in the address.                  %%
%%Additional information can be included in %%
%%the Acknowledgments section if necessary. %%
%%%%%%%%%%%%%%%%%%%%%%%%%%%%%%%%%%%%%%%%%%%%%%
\author[A, B]{\fnms{Cathy} \snm{Shyr}\ead[label=e1]{}},
\author[C]{\fnms{Pragya} \snm{Sur}\ead[label=e3]{}},
\author[A, B]{\fnms{Giovanni} \snm{Parmigiani}\ead[label=e2]{}},
\and
\author[D]{\fnms{Prasad} \snm{Patil}\ead[label=e4]{}}
%%%%%%%%%%%%%%%%%%%%%%%%%%%%%%%%%%%%%%%%%%%%%%
%% Addresses                                %%
%%%%%%%%%%%%%%%%%%%%%%%%%%%%%%%%%%%%%%%%%%%%%%
\address[A]{Department of Biostatistics,
Harvard T.H. Chan School of Public Health, Boston, MA, USA
\printead{e1}}

\address[B]{Department of Data Science, Dana-Farber Cancer Institute, Boston, MA, USA
\printead{e2}}

\address[C]{Department of Statistics, Harvard University, Cambridge, MA, USA 
\printead{e3}}

\address[D]{Department of Biostatistics, Boston University School of Public Health, Boston, MA, USA
\printead{e4}}

\end{aug}

\begin{abstract}
Cross-study replicability is a powerful model evaluation criterion that emphasizes generalizability of predictions. When training cross-study replicable prediction models, it is critical to decide between merging and treating the studies separately. We study boosting algorithms in the presence of potential heterogeneity in predictor-outcome relationships across studies and compare two multi-study learning strategies: 1) merging all the studies and training a single model, and 2) multi-study ensembling, which involves training a separate model on each study and ensembling the resulting predictions. In the regression setting, we provide theoretical guidelines based on an analytical transition point to determine whether it is more beneficial to merge or to ensemble for boosting with linear learners. In addition, we characterize a bias-variance decomposition of estimation error for boosting with component-wise linear learners. We verify the theoretical transition point result in simulation and illustrate how it can guide the decision on merging vs. ensembling in an application to breast cancer gene expression data.
\end{abstract}

\begin{keyword}
\kwd{replicability, multi-study, boosting}
\end{keyword}

\end{frontmatter}
%%%%%%%%%%%%%%%%%%%%%%%%%%%%%%%%%%%%%%%%%%%%%%
%% Please use \tableofcontents for articles %%
%% with 50 pages and more                   %%
%%%%%%%%%%%%%%%%%%%%%%%%%%%%%%%%%%%%%%%%%%%%%%
%\tableofcontents
\section{Introduction}
In settings where comparable studies are available, it is critical to simultaneously consider and systematically integrate information across multiple studies when training prediction models. Multi-study prediction is motivated by applications in biomedical research, where exponential advances in technology and facilitation of systematic data-sharing increased access to multiple studies (\cite{kannan2016public, manzoni2018genome}). When training and test studies come from different distributions, prediction models trained on a single study generally perform worse on out-of-study samples due to heterogeneity in study design, data collection methods, and sample characteristics. (\cite{castaldi2011empirical, bernau2014cross, trippa2015bayesian}). Training prediction models on multiple studies can address these challenges and improve the cross-study replicability of predictions. 

Recent work in multi-study prediction investigated two approaches for training cross-study replicable models: 1) merging all studies and training a single model, and 2) multi-study ensembling that involves training a separate model on each study and combining the resulting predictions. When studies are relatively homogeneous, \cite{patil2018training} showed that merging can lead to improved replicability over ensembling due to increase in sample size; as between-study heterogeneity increases, multi-study ensembling demonstrated preferable performance. While the trade-off between these approaches has been explored in detail for random forest (\cite{ramchandran2020tree}) and linear regression (\cite{guan2019merging}), none have examined this for boosting, one of the most successful and popular supervised learning algorithms.

 Boosting combines a powerful machine learning approach with classical statistical modeling. Its flexible choice of base learners and loss functions makes it highly customizable to many data-driven tasks including binary classification (\cite{freund1997decision}), regression (\cite{friedman2001greedy}) and survival analysis (\cite{wang2010buckley}). To the best of our knowledge, this work is the first to study boosting algorithms in a setting with multiple and potentially heterogeneous training and test studies. Existing findings on boosting are largely rooted in theories based on a single training study, and extensions of the algorithm to a multi-study setting often assume a subset of the training study shares the same distribution as the test study. \cite{buehlmann2006boosting} and \cite{tutz2007boosting} studied boosting with linear base learners and characterized an exponential bias-variance trade-off under the assumption that the training and test studies have the same predictor distribution. \cite{habrard2013boosting} proposed a boosting algorithm for domain adaptation with a single training study. \cite{dai2007boosting} proposed a transfer learning framework for boosting that uses a small amount of labeled data from the test study in addition to the training data to make classifications on the test study. This approach was extended to handle data from multiple training studies (\cite{yao2010boosting, bellot2019boosting}) and modified for regression (\cite{pardoe2010boosting}) and survival analysis (\cite{bellot2019boosting}). 

In this paper, we study boosting algorithms in a regression setting and compare cross-study replicability of merging versus multi-study ensembling. We assume a flexible mixed effects model with potential heterogeneity in predictor-outcome relationships across studies and provide theoretical guidelines to determine whether merging is more beneficial than ensembling for a given collection of training datasets. In particular, we characterize an analytical transition point beyond which ensembling exhibits lower mean squared prediction error than merging for boosting with linear learners. Conditional on the selection path, we characterize a bias-variance decomposition for the estimation error of boosting with component-wise linear learners. We verify the theoretical transition point results via simulations, and illustrate how it may guide practitioners' choice regarding merging vs. ensembling in a breast cancer application.

\section{Methods}
\subsection{Multi-study Setup}
We consider $K$ training studies and $V$ test studies that measure the same outcome and the same $p$ predictors. Each study has size $n_k$ with a combined size of $N = \sum_{k=1}^K n_k$ for the training studies and $N^{\text{Test}}=\sum_{k=K+1}^{K+V} n_k$ for the test studies. Let $Y_k \in \mathbb{R}^{n_k}$ and $X_k \in \mathbb{R}^{n_k \times p}$ denote the outcome vector and predictor matrix for study $k$, respectively. The linear mixed effects data generating model is of the form 
\begin{equation}
\label{eqn:lme}
     Y_k =  X_k\beta  +  Z_k  \gamma_k +  \epsilon_k, \quad k = 1, \ldots, K + V
\end{equation}
where $\beta \in \mathbb{R}^p$ are the fixed effects and $\gamma_k \in \mathbb{R}^q$ the random effects with $E\left[\gamma_k\right] = 0$ and $Cov(\gamma_k) = \text{diag(}\sigma^2_1, \ldots, \sigma^2_q) \eqqcolon G$. If $\sigma^2_j > 0,$ then the effect of the $j$th predictor varies across studies; if $\sigma^2_j = 0$, then the predictor has the same effect in each study.  The matrix $Z_k \in \mathbb{R}^{n_k \times q}$ is a subset of $X_k$ that corresponds to the random effects, and $\epsilon_k$ are the residual errors where $E[\epsilon_k] = 0, Cov(\epsilon_k) = \sigma^2_{\epsilon} I$, and $Cov(\gamma_k, \epsilon_k) = 0.$ We consider an extension of (\ref{eqn:lme}) and assume the study data are generated under the mixed effects model of the form
\begin{equation}
\label{eqn:gme}
     Y_k =  f(X_k)  +  Z_k  \gamma_k +  \epsilon_k, \quad k = 1, \ldots, K+V
\end{equation}
where $f(\cdot)$ is a real-valued function. Compared to (\ref{eqn:lme}), the model in (\ref{eqn:gme}) provides more flexibility in fitting the mean function $E(Y_k)$. 

For any study $k$, we assume $Y_k$ is centered to have zero mean and $X_k$ standardized to have zero mean and unit $\ell_2$ norm, i.e., $\norm{X_{jk}}_2 = 1$ for $j = 1, \ldots, p,$ where $X_{jk} \in \mathbb{R}^N$ denotes the $j$th predictor in study $k$. Unless otherwise stated, we use $i \in \{1, \ldots, N\}$ to index the observations, $j \in \{1, \ldots, p\}$ the predictors, and $k \in \{1, \ldots, K + V\}$ the studies. For example,  $X_{ijk} \in \mathbb{R}$ is the value of the $j$th predictor for observation $i$ in study $k.$ We formally introduce boosting on the merged study $(Y, X)$ in the next section, but the formulation is the same for the $k$th study if one were to replace $(Y, X)$ with $(Y_k, X_k)$. In particular, we focus on boosting with linear learners due to its analytical tractability. We denote a linear learner as an operator $H: \mathbb{R}^N \rightarrow \mathbb{R}^N$ that maps the responses $Y$ to fitted values $\hat{Y}$. Examples of linear learners include ridge regression and more general projectors to a class of basis functions such as regression or smoothing splines. We denote the basis-expanded predictor matrix by $\tilde{X} \in \mathbb{R}^{N \times P}$ and the subset of predictors with random effects by $\tilde{Z} \in \mathbb{R}^{N \times Q}$. We define the basis-expanded predictor matrix as
$$\tilde{X} = \left[h(X_i) \quad \cdots \quad h(X_N)\right]^T \in \mathbb{R}^{N \times P},$$
where $$h(X_i) = \left(h_{11}(X_{i1}),\ldots, h_{U_11}(X_{i1}), \ldots, h_{1p}(X_{ip}), \ldots, h_{U_Pp}(X_{ip})\right) \in \mathbb{R}^P, \quad i = 1,\ldots, N$$
is the vector of $P = \sum_{p}U_p$ one-dimensional basis functions evaluated at the predictors $X_i \in \mathbb{R}^p$. As an example, suppose we have $p=2$ covariates, $X_{i1}, X_{i2}$, and we want to model $X_{i1}$ linearly and $X_{i2}$ with a cubic spline at knots $\xi_1 = 0$ and $\xi_2 = 1.5$. The basis-expanded predictor matrix $\tilde{X}$ contains the following vector of $P = 6$ basis functions:
$$h(X_i) = \left(h_{11}(X_{i1}), h_{12}(X_{i2}), h_{22}(X_{i2}), h_{32}(X_{i2}), h_{42}(X_{i2}), h_{51}(X_{i2})\right), \quad i = 1, \ldots, N$$ where
\begin{center}
    $
\begin{aligned}[c]
  h_{11}(X_{i1}) &= X_{i1}\\
 h_{12}(X_{i2}) &= X_{i2}\\
h_{22}(X_{i2}) &= X_{i2}^2\\
\end{aligned}
\qquad \qquad
\begin{aligned}[c]
h_{32}(X_{i2}) &= X_{i2}^3\\
h_{42}(X_{i2}) &= (X_{i2} - 0)^3_{+}\\
h_{52}(X_{i2}) &= (X_{i2} - 1.5)^3_+
\end{aligned}
$
\end{center}
and $(X_{i2} - \xi)^3_+ = max\left\{(X_{i2}-\xi)^3, 0\right\}.$ For $\lambda \geq 0$, our goal is to minimize the objective $$||Y - \tilde{X}\beta||^2_2 + \lambda \beta^T\beta$$
with respect to parameters $\beta \in \mathbb{R}^P$. We denote the vector of coefficient estimates and fitted values by $\hat{\beta} \coloneqq BY$ and $\hat{Y} \coloneqq HY$, respectively, where $$B\coloneqq (\tilde{X}^T\tilde{X} + \lambda I )^{-1}\tilde{X}^T \in \mathbb{R}^{P \times N}$$ and $$H\coloneqq \tilde{X}(\tilde{X}^T\tilde{X} + \lambda I )^{-1}\tilde{X}^T = \tilde{X}B \in \mathbb{R}^{N \times N}.$$

\subsection{Boosting with linear learners} 
%The boosting algorithm can be viewed as a functional gradient descent method (\cite{friedman2001greedy}). 
Given the basis-expanded predictor matrix $\tilde{X} \in \mathbb{R}^{N \times P}$, the goal of boosting is to obtain an estimate $\hat{F}(\tilde{X})$ of the function $F(\tilde{X})$ that minimizes the expected loss 
$E\left[\ell(Y, F(\tilde{X}))\right]$ for a given loss function $\ell(\cdot, \cdot): \mathbb{R}^{N} \times \mathbb{R}^{N} \rightarrow \mathbb{R}^N_+,$. Here, the
 outcome $Y \in \mathbb{R}^N$ may be continuous (regression problem) or discrete (classification problem). Examples of $\ell(Y, F)$ include exponential loss $exp(YF)$ for AdaBoost (\cite{freund1995boosting}) and $\ell_2$ (squared error) loss $(Y - F)^2/2$ for $\ell_2$ boosting (\cite{buhlmann2003boosting}). In finite samples, estimation of $F(\cdot)$ is done by minimizing the empirical risk via functional gradient descent where the base learner $g(\tilde{X}; \hat{\theta})$ is repeatedly fit to the negative gradient vector $$r =\left. \frac{-\partial \ell(Y, F)}{\partial F}\right|_{F = \hat{F}_{(m)}(\tilde{X})}$$ evaluated at $\hat{F}_{(m)}(\tilde{X}) = \hat{F}_{(m-1)}(\tilde{X}) + \eta g(\tilde{X}; \hat{\theta}_m)$ across $m = 1, \ldots, M$ iterations. Here, $\eta \in (0, 1]$ denotes the learning rate, and $\hat{\theta}_m$ denotes the estimated finite or infinite-dimensional parameter that characterizes $g$ (i.e., if $g$ is a regression tree, then $\theta$ denotes the tree depth, minimum number of observations in a leaf, etc.). Under $\ell_2$ loss, the negative gradient at iteration $m$ is equivalent to the residuals $Y - \hat{F}_m(\tilde{X})$. Therefore, $\ell_2$ boosting produces a stage-wise approach that iteratively fits to the current residuals (\cite{buhlmann2003boosting, friedman2001greedy}).

Let $\hat{\beta}_{(m)} \in \mathbb{R}^P$ and $\hat{Y}_{(m)} \in \mathbb{R}^N$ denote the coefficient estimates and fitted values at the $m$th boosting iteration, respectively. We describe $\ell_2$ boosting with linear learners in \textbf{Algorithm 1}.
\begin{algorithm}
\caption{$\ell_2$ boosting with linear learners.}
\begin{algorithmic}[1]

\State Initialization:
$$\hat{ \beta}_{(0)} =  0, \quad \hat{ Y}_{(0)} = 0$$

\State Iteration: For $m = 1, 2, \ldots, M,$ fit a linear learner to the residuals $ r_{(m)} =  Y - \hat{ Y}_{(m-1)}$ and obtain the estimated coefficients $$\hat{ \beta}_{(m)}^{\text{current}} = Br_{(m)}$$
and fitted values
$$\hat{ Y}_{(m)}^{\text{current}} =  Hr_{(m)}.$$
The new coefficient estimates are given by:
$$\hat{ \beta}_{(m)} = \hat{ \beta}_{(m - 1)} + \eta \hat{ \beta}_{(m)}^{\text{current}}$$
The new fitted values are given by:
$$\hat{ Y}_{(m)} = \hat{ Y}_{(m - 1)} +\eta \hat{ Y}_{(m)}^{\text{current}}$$
where $\eta \in (0, 1]$ is the learning rate.
\end{algorithmic}
\end{algorithm}

By Proposition 1 in \cite{buhlmann2003boosting}, the $\ell_2$ boosting coefficient estimates at iteration $M$ can be written as:
\begin{equation}
\label{eqn:l2boost}
    \hat{\beta}^{\text{Merge}}_{(M)} = \sum_{m=1}^M \eta B(I - \eta H)^{m-1}Y.
\end{equation}
Equation (\ref{eqn:l2boost}) represents $\hat{\beta}^{\text{Merge}}_{(M)}$ as the sum across coefficient estimates obtained from repeatedly fitting a linear learner $H$ to residuals $r_{(m)} = (I - \eta H)^{m-1}Y$ at iteration $m = 1, \ldots, M.$ The ensemble estimator, based on pre-specified weights $w_k$ such that $\sum_{k=1}^K w_k = 1,$ is
\begin{equation}
 \hat{\beta}^{Ens}_{(M)} = \sum_{k=1}^K w_k \hat{\beta}_{k(M)} = \sum_{k=1}^K w_k \left[\sum_{m=1}^M \eta B_k(I - \eta H_k)^{m-1}Y_k\right]
\end{equation}
where $B_k$ and $H_k \hspace{0.3em} (k = 1, \ldots, K)$ are study-specific analogs of $B$ and $H,$ respectively. 

\subsection{Boosting with component-wise linear learners}
Boosting with component-wise linear learners (\cite{buhlmann2007boosting, buhlmann2003boosting}), also known as LS-Boost (\cite{friedman2001greedy}) or least squares boosting (\cite{freund2017new}), determines the predictor $\tilde{X}_{\hat{j}_{(m)}} \in \mathbb{R}^N$ that results in the maximal decrease in the univariate least squares fit to the current residuals $r_{(m)}$. The algorithm then updates the $\hat{j}_{(m)}$th coefficient and leaves the rest unchanged. Let $\hat{\beta}_{(m)j} \in \mathbb{R}$ denote the $j$th coefficient estimate at the $m$th iteration and $\hat{\beta}_{\hat{j}_{(m)}} \in \mathbb{R}$ the estimated coefficient of the selected covariate at iteration $m$. \textbf{Algorithm 2} describes boosting with component-wise linear learners.
\begin{algorithm}
\caption{$\ell_2$ boosting with component-wise linear learners.}
\begin{algorithmic}[1]

\State Initialization: $$\hat{ \beta}_{(0)} =  0, \quad \hat{ Y}_{(0)} =  0$$
\State Iteration: For $m = 1, 2, \ldots, M,$ compute the residuals
$$ r_{(m)} =  Y - \hat{ Y}_{(m-1)}.$$
Determine the covariate $\tilde{X}_{\hat{j}_{(m)}}$ that results in the best univariate least squares fit to $r_{(m)}:$
 $$\hat{j}_{(m)} = \argmin_{1 \leq j \leq P} \sum_{i=1}^N \left( r_{(m)i} -  \tilde{X}_{ij}\hat{\beta}_{(m)j}\right)^2.$$
Calculate the corresponding coefficient estimate:
$$
\hat{\beta}_{\hat{j}_{(m)}} =  \left(\tilde{X}_{\hat{j}_{(m)}}^T  \tilde{X}_{\hat{j}_{(m)}}\right)^{-1} \tilde{X}_{\hat{j}_{(m)}}^T r_{(m)}.$$
Update the fitted values and the coefficient estimate for the $\hat{j}_{(m)}$th covariate
\begin{align*}
 \hat{Y}_{(m)} &= \hat{ Y}_{(m - 1)} + \eta  \tilde{X}_{\hat{j}_{(m)}}\hat{\beta}_{\hat{j}_{(m)}}\\
\hat{\beta}_{(m)\hat{j}_{(m)}} &= \hat{\beta}_{(m-1)\hat{j}_{(m)}} + \eta \hat{\beta}_{\hat{j}_{(m)}}
\end{align*}
where $\eta \in (0, 1]$ is a learning rate.
\end{algorithmic}
\end{algorithm}

\begin{proposition} 
\label{prop:1}
Let $e_{\hat{j}_{(m)}} \in \mathbb{R}^P$ denote a unit vector with a 1 in the $\hat{j}_{(m)}$-th position,
$$B_{(m)} = e_{\hat{j}_{(m)}} \left(\tilde{X}_{\hat{j}_{(m)}}^T \tilde{X}_{\hat{j}_{(m)}}\right)^{-1}\tilde{X}^T_{\hat{j}_{(m)}},$$
and 
$$H_{(m)} = \tilde{X}_{\hat{j}_{(m)}}\left(\tilde{X}_{\hat{j}_{(m)}}^T \tilde{X}_{\hat{j}_{(m)}}\right)^{-1}\tilde{X}^T_{\hat{j}_{(m)}}.$$
The coefficient estimates for $\ell_2$ boosting with component-wise linear learners at iteration $M$ can be written as:
\begin{equation}
  \hat{\beta}^{\text{Merge, CW}}_{(M)} = \sum_{m=1}^M \eta B_{(m)}\left(\prod_{\ell = 0}^{m-1} \left(I - \eta H_{(m-\ell-1)}\right)\right)Y.
\end{equation}
\end{proposition}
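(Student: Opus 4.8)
The plan is to mirror the argument behind equation (\ref{eqn:l2boost}) (Proposition 1 of \cite{buhlmann2003boosting}), but to carry the iteration index through the learner operators, since in the component-wise case the selected coordinate $\hat{j}_{(m)}$—and hence the rank-one operators $B_{(m)}$ and $H_{(m)}$—changes from step to step. Concretely, I would first rewrite the scalar updates in Algorithm 2 in operator form. Since $\hat{\beta}_{\hat{j}_{(m)}} = (\tilde{X}_{\hat{j}_{(m)}}^T \tilde{X}_{\hat{j}_{(m)}})^{-1}\tilde{X}_{\hat{j}_{(m)}}^T r_{(m)}$, multiplying by $\tilde{X}_{\hat{j}_{(m)}}$ gives $\tilde{X}_{\hat{j}_{(m)}}\hat{\beta}_{\hat{j}_{(m)}} = H_{(m)} r_{(m)}$, and lifting the scalar coefficient to the full parameter vector via the unit vector $e_{\hat{j}_{(m)}}$ gives $e_{\hat{j}_{(m)}}\hat{\beta}_{\hat{j}_{(m)}} = B_{(m)} r_{(m)}$. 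Hence the fitted-value and coefficient updates become $\hat{Y}_{(m)} = \hat{Y}_{(m-1)} + \eta H_{(m)} r_{(m)}$ and $\hat{\beta}_{(m)} = \hat{\beta}_{(m-1)} + \eta B_{(m)} r_{(m)}$, where the latter is valid because only the $\hat{j}_{(m)}$-th coordinate is modified at iteration $m$.

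Next I would derive the residual recursion. Substituting $r_{(m)} = Y - \hat{Y}_{(m-1)}$ into the fitted-value update yields $\hat{Y}_{(m)} = \hat{Y}_{(m-1)} + \eta H_{(m)}(Y - \hat{Y}_{(m-1)})$, so that $r_{(m+1)} = Y - \hat{Y}_{(m)} = (I - \eta H_{(m)}) r_{(m)}$, with $r_{(1)} = Y$ because $\hat{Y}_{(0)} = 0$. A short induction then gives the closed form $r_{(m)} = (I - \eta H_{(m-1)})(I - \eta H_{(m-2)}) \cdots (I - \eta H_{(1)}) Y$, i.e. the ordered product with operator index decreasing from left to right. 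Summing the telescoped coefficient updates, $\hat{\beta}^{\text{Merge, CW}}_{(M)} = \sum_{m=1}^M \eta B_{(m)} r_{(m)}$, and inserting this product expression for $r_{(m)}$ produces the stated formula. The bookkeeping detail is that the claimed product $\prod_{\ell=0}^{m-1}(I - \eta H_{(m-\ell-1)})$ runs down to the factor $(I - \eta H_{(0)})$; this is reconciled with the recursion above by adopting the convention $H_{(0)} = 0$ (equivalently $r_{(1)} = Y$), so that the $\ell = m-1$ term is the identity and the number of nontrivial factors matches.

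The step I expect to require the most care is the ordering of the product: unlike the fixed-operator setting of (\ref{eqn:l2boost}), the operators $(I - \eta H_{(m)})$ do not commute, so the induction must preserve the exact left-to-right order and align it with the index substitution $\ell \mapsto m - \ell - 1$ used in the statement. I would therefore state the residual recursion precisely, carry out the induction on $m$ writing out the base cases $m = 1, 2$, and verify the index reparametrization carefully rather than treating the product as a commuting power as one can in the non-component-wise case. The remaining manipulations—identifying the rank-one operators $B_{(m)}$ and $H_{(m)}$ and telescoping the coefficient sum—are routine.
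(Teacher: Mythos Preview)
Your proposal is correct and follows essentially the same route as the paper: establish the residual recursion $r_{(m)} = (I-\eta H_{(m-1)})r_{(m-1)}$ by induction (the paper takes $\eta=1$ without loss of generality and uses the convention $H_{(0)}=0$ so that $r_{(1)}=(I-H_{(0)})Y$), then observe that the step-$m$ coefficient increment is $\eta B_{(m)}r_{(m)}$ and sum. Your added remarks on the $H_{(0)}=0$ convention and on the operator ordering are exactly the bookkeeping the paper's proof relies on implicitly.
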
 
A proof is provided in the appendix. Proposition \ref{prop:1} represents $\hat{\beta}^{\text{Merge,CW}}_{(M)}$ as the sum across coefficient estimates obtained from repeatedly fitting a univariate linear learner $H_{(m)}$ to the current residuals $r_{(m)} =(\prod_{\ell = 0}^{m-1} (I - \eta H_{(m-\ell-1)}))Y$ at iteration $m$. As $M \rightarrow \infty,$ $\hat{\beta}^{\text{Merge,CW}}_{(M)}$ converges to a least squares solution that is unique if the predictor matrix has full rank (\cite{buhlmann2007boosting}). The ensemble estimator, based on pre-specified weights $w_k$, is

\begin{equation}
    \hat{\beta}^{\text{Ens, CW}}_{(M)} = \sum_{k=1}^K w_k \hat{\beta}^{\text{CW}}_{(M)k} = \sum_{k=1}^K w_k \left[\sum_{m=1}^M \eta  B_{(m)k} \left(\prod_{\ell = 0}^{m-1} \left( I - \eta  H_{(m-\ell-1)k}\right)\right) Y_k\right]
\end{equation}
where $B_{(m)k}$ and $H_{(m)k}$ are study-specific analogs of $B_{(m)}$ and $H_{(m)},$ respectively.
\subsection{Performance comparison}\label{performance}
We compare merging and ensembling based on mean squared prediction error (MSPE) of $V$ unseen test studies $\tilde{X}_0 \in \mathbb{R}^{N^{\text{Test}} \times P}$ with unknown outcome vector $Y_0 \in \mathbb{R}^{N^{\text{Test}}}$, $$E[||Y_0 - \tilde{X}_0\hat{\beta}_{(M)}||^2_2]$$  where $\norm{\cdot}_2$ denotes the $\ell_2$ norm. 
To properly characterize the performance of boosting with component-wise linear learners (\textbf{Algorithm 2}), we account for the algorithm's adaptive nature by conditioning on its selection path. To make progress analytically, we assume $Y$ is normally distributed with mean $\mu \coloneqq f(\tilde{X})$ and covariance $\Sigma \coloneqq \text{blkdiag}(\{Z_kGZ_k^T+\sigma^2_{\epsilon}I\}^K_{k=1})$. Note that at iteration $m$, the covariate $\Tilde{X}_{\hat{j}_{(m)}}$ will result in the best univariate least squares fit to $r_{(m)}$ if and only if it satisfies 
%expressed as
$$
     \norm{(I -  H_{(\hat{j}_{(m)})}) r_{(m)}}_2^2 \leq \norm{( I -  H_{(j)}) r_{(m)}}_2^2,$$
     which is equivalent to
     \begin{equation}
     \label{eqn:selection}
         (sgn_{(m)} \tilde{X}_{j(m)}^T/\norm{ \tilde{X}_{\hat{j}_{(m)}}}_2 \pm   \tilde{X}_j^T/\norm{ \tilde{X}_{j}}_2) r^{(m)} \geq 0
     \end{equation}
$\forall j \neq \hat{j}_{(m)}$, $sgn_{(m)} = \text{sign}(\tilde{X}_{\hat{j}_{(m)}}^T r_{(m)})$, where $$r_{(m)} = \prod_{\ell=0}^{m-1}( I - \eta  H_{(m-\ell-1)}) Y :=  \Upsilon_{(m)}  Y.$$ With fixed $\tilde{X}$, the inequalities in (\ref{eqn:selection}) can be compactly represented as the polyhedral representation $\Gamma Y \geq 0$ for a matrix $\Gamma \in \mathbb{R}^{2M(P-1) \times N},$ with the $(\tilde{m} + 2(j - \omega(j)) - 1)$th and $(\tilde{m} + 2(j - \omega(j)))$th rows given by $$(sgn_{(m)} \tilde{X}_{\hat{j}_{(m)}}^T/\norm{ \tilde{X}_{\hat{j}_{(m)}}}_2 \pm \tilde{X}_j^T/\norm{ \tilde{X}_j}_2) \Upsilon^{(m)}$$ $\forall j \neq \hat{j}_{(m)}$ with $\tilde{m} = 2(P-1)(m-1)$ and $\omega(j) = \mathbb{1}\{j > \hat{j}_{(m)}\}$ (\cite{rugamer2020inference}). The $j$th regression coefficient in \textbf{Algorithm 2} can be written as $$\hat{\beta}^{\text{Merge, CW}}_{(M)j} = v_j^TY,$$
where $$v_j = (\sum_{m=1}^M \eta B_{(m)} (\prod_{\ell = 0}^{m-1}(I - \eta H_{(m-\ell - 1)}))^Te_j,$$ and $e_j \in \mathbb{R}^P$ is a unit vector. The distribution of $\hat{\beta}^{\text{Merge, CW}}_{(M)j}$ conditional on the selection path is given by the polyhedral lemma in \cite{lee2016exact}.

\begin{lemma}[Polyhedral lemma from \cite{lee2016exact}]
\label{lemma:lee}
Given the selection path $$\mathcal{P}\coloneqq \{Y:\Gamma Y \geq 0, z_j = z\},$$ where $z_j \coloneqq (I - c_jv_j^T)Y$ and $c_j \coloneqq \Sigma v_j(v_j^T\Sigma v_j)^{-1}$, $$\hat{\beta}^{\text{Merge, CW}}_{(M)j}|\mathcal{P} \sim \text{TruncatedNormal}\left(v_j^T\mu, v_j\Sigma v_j^T, a_j, b_j\right),$$
where
 \begin{align*}
    a_j &= \max_{\ell:(\Gamma c_j)_{\ell} > 0} \frac{0-(\Gamma z_j)_{\ell}}{(\Gamma c_j)_{\ell}}\\
    b_j &= \min_{\ell: (\Gamma c_j)_{\ell} < 0} \frac{0-(\Gamma z_j)_{\ell}}{(\Gamma c_j)_{\ell}}.
\end{align*}
\end{lemma}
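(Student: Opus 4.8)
The plan is to recognize this as a direct instance of the polyhedral lemma of \cite{lee2016exact}: the quantity of interest $\hat{\beta}^{\text{Merge, CW}}_{(M)j} = v_j^T Y$ is a fixed linear contrast of the Gaussian vector $Y \sim N(\mu, \Sigma)$ (fixed because we have conditioned on the selection path, which renders $v_j$ and $\Gamma$ non-random), so its unconditional law is $N(v_j^T \mu, v_j^T \Sigma v_j)$. The two ingredients I would assemble are (i) a decomposition of $Y$ that makes the contrast $v_j^T Y$ independent of a complementary statistic $z_j$, and (ii) a rewriting of the polyhedral event $\{\Gamma Y \ge 0\}$, after conditioning on $z_j$, as a two-sided interval constraint on the scalar $v_j^T Y$.

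First I would establish the decomposition and the independence. Writing $c_j = \Sigma v_j (v_j^T \Sigma v_j)^{-1}$ and $z_j = (I - c_j v_j^T) Y$, a one-line check gives $v_j^T c_j = 1$, hence $c_j (v_j^T Y) + z_j = Y$, so $Y$ is reconstructed from the pair $(v_j^T Y, z_j)$. The crucial point is that $\mathrm{Cov}(v_j^T Y, z_j) = v_j^T \Sigma (I - v_j c_j^T) = v_j^T \Sigma - (v_j^T \Sigma v_j) c_j^T = 0$ by the choice of $c_j$; since $(v_j^T Y, z_j)$ are jointly Gaussian, zero covariance yields full independence. Consequently, conditioning on $z_j = z$ leaves the marginal law $N(v_j^T \mu, v_j^T \Sigma v_j)$ of $v_j^T Y$ unchanged and only restricts its range through the selection constraint.

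Next I would translate the selection event. Substituting $Y = c_j (v_j^T Y) + z_j$ into $\Gamma Y \ge 0$ and reading it row by row gives, for each $\ell$, $(\Gamma c_j)_\ell (v_j^T Y) \ge -(\Gamma z_j)_\ell$. Rows with $(\Gamma c_j)_\ell > 0$ produce lower bounds $v_j^T Y \ge -(\Gamma z_j)_\ell / (\Gamma c_j)_\ell$, rows with $(\Gamma c_j)_\ell < 0$ produce upper bounds, and rows with $(\Gamma c_j)_\ell = 0$ reduce to constraints $(\Gamma z_j)_\ell \ge 0$ that involve $z_j$ alone and are therefore fixed by the conditioning. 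Taking the tightest lower and upper bounds yields exactly the stated $a_j$ and $b_j$, so that on the event $\{z_j = z\}$ the selection constraint is equivalent to $a_j \le v_j^T Y \le b_j$.

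Combining the two steps finishes the argument: conditional on $\mathcal{P}$, the statistic $v_j^T Y$ is a $N(v_j^T \mu, v_j^T \Sigma v_j)$ variable restricted to the fixed interval $[a_j, b_j]$, i.e. $\text{TruncatedNormal}(v_j^T \mu, v_j^T \Sigma v_j, a_j, b_j)$. The main obstacle—and the only step that really uses the Gaussian assumption rather than linear algebra—is the independence claim in the first step; everything downstream hinges on $a_j$ and $b_j$ being measurable functions of $z_j$ only, so that they act as constants once $z_j$ is held fixed. A minor point to verify carefully is that the $(\Gamma c_j)_\ell = 0$ rows carry no information about $v_j^T Y$, so they can be dropped from the interval computation and folded into the event $\{z_j = z\}$.
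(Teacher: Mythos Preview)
Your proposal is correct and follows essentially the same route as the paper's proof: decompose $Y = c_j(v_j^T Y) + z_j$, substitute into $\{\Gamma Y \ge 0\}$, and split the resulting inequalities by the sign of $(\Gamma c_j)_\ell$ to extract the interval $[a_j,b_j]$. If anything, your version is slightly more detailed than the paper's, since you explicitly verify the covariance identity $\mathrm{Cov}(v_j^T Y, z_j)=0$ that underlies the independence claim, whereas the paper simply asserts that $a_j,b_j$ are independent of $v_j^T Y$.
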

A proof is provided in the appendix. The conditioning is important because it properly accounts for the adaptive nature of \textbf{Algorithm 2}.  Conceptually, it measures the magnitude of $\hat{\beta}^{\text{Merge, CW}}_{(M)j}$ among random vectors $Y$ that would result in the selection path $\Gamma Y \geq 0$ for a fixed value of $z_j$. When $\Sigma = \sigma^2I,$ $z_j = (I - v_j(v_j^Tv_j)^{-1}v_j^T)Y$ is the projection onto the orthocomplement of $v_j.$ Accordingly, the polyhedron $\Gamma Y \geq 0$ holds if and only if $v_j^TY$ does not deviate too far from $z_j,$ hence trapping it between bounds $a_j$ and $b_j$ (\cite{tibshirani2016exact}). Moreover, because $a_j$ and $b_j$ are functions of $z_j$ alone, they are independent of $v^TY$ under normality. The result in Lemma \ref{lemma:lee} allows us to analytically characterize the mean squared error of the estimators $\hat{\beta}^{\text{Merge, CW}}_{(M)j}$ and $\hat{\beta}^{\text{Ens, CW}}_{(M)j}$ conditional on their respective selection paths.
\subsection{Implicit regularization and early stopping} \label{subsection:earlystop} In \textbf{Algorithm 1} and \textbf{Algorithm 2}, the learning rate $\eta$ and stopping iteration $M$ together control the amount of shrinkage and training error. A smaller learning rate $\eta$ leads to slower overfitting but requires a larger $M$ to reduce the training error to zero. With a small $\eta$, it is possible to explore a larger class of models, which often leads to models with better predictive performance (\cite{friedman2001greedy}). While boosting algorithms are known to exhibit slow overfitting behavior with small values of $\eta$, it is necessary to implement early stopping strategies to avoid overfitting (\cite{schapire1998boosting}). The boosting fit for \textbf{Algorithm 1} in iteration $m$ (assuming $\eta = 1)$ is $$\mathcal{B}_{(m)}Y \coloneqq (I-(I-H)^{m+1})Y,$$
where $\mathcal{B}_{(m)}: \mathbb{R}^N \rightarrow \mathbb{R}^N$ is the boosting operator. For a base learner that satisfies $\norm{I - H} \leq 1$ for a suitable norm, we have $\mathcal{B}_{(m)}Y \rightarrow Y$ as $m \rightarrow \infty$. That is, if left to run forever, the boosting algorithm converges to the fully saturated  model $Y$ (\cite{buhlmann2007boosting}). A similar argument can be made for \textbf{Algorithm 2} where $$\mathcal{B}^{\text{CW}}_{(m)} = I-(I - H_{(\hat{j}_m)})(I - H_{(\hat{j}_{m-1})}) \cdots (I - H_{(\hat{j}_{1})})$$
is the component-wise boosting operator. We define the degrees of freedom at iteration $m$ as $tr(\mathcal{B}_{(m)})$ and use the corrected AIC criterion ($AIC_c$) (\cite{buehlmann2006boosting}) to choose the stopping iteration $M.$ Compared to cross-validation (CV), $AIC_c$-tuning is computationally efficient as it does not require running the boosting algorithm multiple times. For \textbf{Algorithm 1}, the $AIC_c$ at iteration $m$ is given by
\begin{equation}
    \label{eqn:AICC}
    AIC_c(m) = \log(\hat{\underline{\sigma}}^2) + \frac{1 + tr(\mathcal{B}_{(m)})/N}{1 - (tr(\mathcal{B}_{(m)}) + 2)/N},
\end{equation}
where $\underline{\hat{\sigma}}^2 = \frac{1}{N}\sum_{i=1}^N (Y_i - (\mathcal{B}_{(m)}Y)_i )^2$. The stopping iteration is $$M = \argmin_{1 \leq m \leq m_{upp}} AIC_c(m),$$ where $m_{upp}$ is a large upper bound for the candidate number of boosting iterations (\cite{buehlmann2006boosting}). For \textbf{Algorithm 2}, the $AIC_c$ is computed by replacing $\mathcal{B}_{(m)}$ with $\mathcal{B}^{\text{CW}}_{(m)}.$ We allow the stopping iterations to differ between the merged and ensemble learners. In our results, we denote them by $M$ and $M_{\text{Ens}} = \{M_k\}_{k=1}^K,$ respectively. 
%\comment{PS: Do our results actually rely on stopping times/rules selected by AIC? Otherwise this discussion might be a digression at this point as well.}

\section{Results}
We summarize the degree of heterogeneity in predictor-outcome relationships across studies by the sum of the variances of the random effects divided by the number of fixed effects: $\overline{\sigma^2} \coloneqq tr(G)/P$, where $G \in \mathbb{R}^{Q \times Q}$. For boosting with linear learners, let $\tilde{R} = \sum_{m=1}^{M} \eta B(I-\eta H)^{m-1}$ and $\tilde{R}_k = \sum_{m=1}^{M} \eta B_k(I-\eta H_k)^{m-1}$. Let $b_{\text{Merge}} = Bias(\hat{\beta}^{\text{Merge}}_{(M)}) = \tilde{R}f(\tilde{X}) - f(\tilde{X}_0)$ denote the bias of the boosting coefficients for the merged estimator and $b_{\text{Ens}} = Bias(\hat{\beta}^{\text{Ens}}_{(M_{\text{Ens}})}) = \sum_{k=1}^K w_k \tilde{R}_kf(\tilde{X}_k) - f(\tilde{X}_0)$ the bias for the ensemble estimator. Let $Z' = blkdiag(\{Z_k\}_{k=1}^K)$ and $G' = blkdiag(\{G_k\}_{k=1}^K)$ where $G_k = G$ for $k = 1, \ldots, K.$

\subsection{Boosting with linear learners}
\begin{customthm}{1}\label{thm:1}
Suppose
\begin{equation}
\label{eq:thm1cond}
    \text{tr}( Z'^T  \tilde{R}^T \tilde{X}_0^T \tilde{X}_0  \tilde{R}  Z') - \sum_{k=1}^K w_k^2 \text{tr}( Z_k^T  \tilde{R}_k^T \tilde{X}_0^T \tilde{X}_0 \tilde{ R}_k  Z_k) > 0
\end{equation}

Define
\small
\begin{equation}
\label{eqn:tau}
\scriptsize
  \tau  =\frac{Q}{P} \times  \frac{\sigma^2_{\epsilon}(\sum_{k=1}^Kw_k^2 \text{\text{tr}}( \tilde{R}_k^T  \tilde{X}_0^T \tilde{X}_0  \tilde{R}_k)-\text{tr}( \tilde{R}^T \tilde{X}_0^T \tilde{X}_0  \tilde{R}))+ ( b^{\text{Ens}})^T b^{\text{Ens}} -  ( b^{\text{Merge}})^T b^{\text{Merge}}}{\text{tr}( Z'^T  \tilde{R}^T \tilde{X}_0^T \tilde{X}_0  \tilde{R}  Z') - \sum_{k=1}^K w_k^2 \text{tr}( Z_k^T  \tilde{R}_k^T \tilde{X}_0^T \tilde{X}_0  \tilde{R}_k  Z_k)}
\end{equation}
Then $E[\norm{Y_0 - \tilde{X}_0\hat{\beta}^{\text{Ens}}_{(M_{\text{Ens}})}}^2_2] \leq [\norm{Y_0 - \tilde{X}_0\hat{\beta}^{\text{Merge}}_{(M)}}^2_2]$ if and only if $\overline{\sigma}^2 \geq \tau.$
\end{customthm} 

A proof is provided in the appendix. Under the equal variances assumption, Theorem \ref{thm:1} characterizes a transition point $\tau$ beyond which ensembling outperforms merging for \textbf{Algorithm 1}. $\tau$ is characterized by differences in the predictive performance of merging vs. ensembling driven by within-study variability and bias in the numerator and between-study variability in the denominator. The condition in (\ref{eq:thm1cond}), which ensures $\tau$ is well defined, holds when the between-study variability of $\hat{\beta}^{\text{Merge}}_{(M)}$ is greater than that of $\hat{\beta}^{\text{Ens}}_{(M_{Ens})}$. This is generally true because merging does not account for between-study heterogeneity. $\tau$ depends on the population mean function $f$ through the bias term. Therefore, an estimate of $f$ is required to estimate the transition point unless the bias is equal to zero. One example of an unbiased estimator is ordinary least squares, which can be obtained by setting $H = \tilde{X}(\tilde{X}^T\tilde{X})\tilde{X}^T$ and $M = \eta = 1$. In general, for any linear learner $H: \mathbb{R}^N \rightarrow \mathbb{R}^N$, the transition point in \cite{guan2019merging} (cf., Theorem 1) is a special case of (\ref{eqn:tau}) when $M = \eta = 1$.

\begin{corollary}
\label{cor:1}
  Suppose $\text{tr}( Z'^T  \tilde{R}^T \tilde{X}_0^T \tilde{X}_0  \tilde{R}  Z') \neq 0.$ As $\sigma^2 \rightarrow \infty,$
  $$\frac{E[\norm{Y_0 - \tilde{X}\hat{ \beta}^{\text{Ens}}_{(M_{\text{Ens}})}}^2_2]}{ E[\norm{Y_0 - \tilde{X}_0\hat{ \beta}^{\text{Merge}}_{(M)}}^2_2]} \longrightarrow \frac{\sum_{k=1}^K w_k^2 \text{tr}( Z_k^T  \tilde{R}_k^T \tilde{X}_0^T \tilde{X}_0  \tilde{R}_k  Z_k)}{ \text{tr}( Z'^T  \tilde{R}^T \tilde{X}_0^T \tilde{X}_0  \tilde{R}  Z')}.$$
\end{corollary}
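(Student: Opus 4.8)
The plan is to leverage the mean-squared-prediction-error decomposition already established in the proof of Theorem \ref{thm:1} and simply track the dependence on $\sigma^2$. Recall that, invoking the data-generating model (\ref{eqn:gme}) together with the independence of the training and test studies, each MSPE splits into a within-study variance term (carrying the factor $\sigma^2_\epsilon$), a between-study variance term (carrying the random-effects covariance $G'$), a squared-bias term, and a common irreducible-error constant $c$ arising from the test noise. Under the equal-variances assumption $G = \sigma^2 I$, so that $G' = \text{blkdiag}(\{G_k\}_{k=1}^K) = \sigma^2 I$, the between-study variance terms are \emph{exactly linear} in $\sigma^2$, while every other contribution is free of $\sigma^2$.

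Concretely, I would first write the two errors in the affine form
$$E[\norm{Y_0 - \tilde{X}_0 \hat{\beta}^{\text{Merge}}_{(M)}}^2_2] = A_{\text{Merge}} + \sigma^2\, \text{tr}(Z'^T \tilde{R}^T \tilde{X}_0^T \tilde{X}_0 \tilde{R} Z'),$$
$$E[\norm{Y_0 - \tilde{X}_0 \hat{\beta}^{\text{Ens}}_{(M_{\text{Ens}})}}^2_2] = A_{\text{Ens}} + \sigma^2 \sum_{k=1}^K w_k^2\, \text{tr}(Z_k^T \tilde{R}_k^T \tilde{X}_0^T \tilde{X}_0 \tilde{R}_k Z_k),$$
where $A_{\text{Merge}} = \sigma^2_\epsilon\, \text{tr}(\tilde{R}^T \tilde{X}_0^T \tilde{X}_0 \tilde{R}) + (b^{\text{Merge}})^T b^{\text{Merge}} + c$ and $A_{\text{Ens}} = \sigma^2_\epsilon \sum_{k=1}^K w_k^2\, \text{tr}(\tilde{R}_k^T \tilde{X}_0^T \tilde{X}_0 \tilde{R}_k) + (b^{\text{Ens}})^T b^{\text{Ens}} + c$ collect all $\sigma^2$-free pieces. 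Both $A_{\text{Merge}}$ and $A_{\text{Ens}}$ are finite and independent of $\sigma^2$, since $\tilde{R}$, $\tilde{R}_k$, the designs, and the biases are held fixed; in particular the biases depend on $\sigma^2$ through neither the mean function $f$ nor the fixed smoother operators. I would then divide the ensemble error by the merged error, factor $\sigma^2$ out of numerator and denominator, and let $\sigma^2 \to \infty$: the ratios $A_{\text{Ens}}/\sigma^2$ and $A_{\text{Merge}}/\sigma^2$ vanish, and what remains is the quotient of the between-study variance coefficients, which is exactly the claimed limit. The hypothesis $\text{tr}(Z'^T \tilde{R}^T \tilde{X}_0^T \tilde{X}_0 \tilde{R} Z') \neq 0$ guarantees the limiting denominator is nonzero (the quantity is in any case nonnegative, being $\text{tr}((\tilde{X}_0 \tilde{R} Z')^T(\tilde{X}_0 \tilde{R} Z'))$), so the limit is well defined.

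This corollary is essentially immediate once the decomposition of Theorem \ref{thm:1} is in hand, so I do not expect any genuine analytic difficulty. The only point requiring care is the bookkeeping that isolates the $\sigma^2$-linear terms: one must confirm that the between-study variance is the \emph{unique} place where $G'$, and hence $\sigma^2$, enters, and that all remaining contributions stay bounded as $\sigma^2 \to \infty$. Both facts follow directly from the substitution $G' = \sigma^2 I$ under equal variances, so the limiting ratio is simply the quotient of the leading coefficients in $\sigma^2$.
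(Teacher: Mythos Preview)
Your proposal is correct and mirrors the paper's own treatment: the paper simply states that the corollary ``follows immediately from Theorem~\ref{thm:1}'' and offers no further detail, and your argument---reading off the affine-in-$\sigma^2$ form of each MSPE from the Theorem~\ref{thm:1} decomposition and taking the ratio of leading coefficients---is exactly the intended immediate deduction.
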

This result follows immediately from Theorem \ref{thm:1}. According to Corollary \ref{cor:1}, the asymptote of the MSPE ratio comparing ensembling to merging equals the ratio of between-study variability. Because the merged estimator does not account for between-study variability, the asymptote is less than one. 

Let $\sigma^2_{(1)}, \ldots, \sigma^2_{(D)}$ denote the distinct values of variances of the random effects where $D \leq Q$, and let $J_d$ denote the number of random effects with variance $\sigma^2_{(d)}.$
\begin{theorem}
\label{thm:2}
Suppose $$\max_d \sum_{i: \sigma^2_i = \sigma^2_{(d)}} \left[ \sum_{k=1}^K  \left(Z'^T\tilde{R}^T\tilde{X}_0^T\tilde{X}_0 \tilde{R}Z'\right)_{i + Q \times (k - 1), i + Q \times (k - 1)} -  w_k^2\left(Z_k^T\tilde{R}_k^T\tilde{X}^T_0\tilde{X}_0\tilde{R}_k Z_k\right)_{i,i}\right] > 0$$
and define 
\begin{equation}
    \label{eqn:tau1}
    \scriptsize
    \tau_1 = \frac{ \sigma^2_{\epsilon}(\sum_{k=1}^Kw_k^2 \text{\text{tr}}(\tilde{R}_k^T  \tilde{X}_0^T \tilde{X}_0  \tilde{R}_k)-\text{tr}( \tilde{R}^T \tilde{X}_0^T \tilde{X}_0  \tilde{R}))+( b^{\text{Ens}})^T b^{\text{Ens}} - ( b^{\text{Merge}})^T b^{\text{Merge}}}{P \max_{d} \frac{1}{J_d}\sum_{i: \sigma^2_i = \sigma^2_{(d)}} [ \sum_{k=1}^K  (Z'^T\tilde{R}^T\tilde{X}_0^T\tilde{X}_0 \tilde{R}Z')_{i + Q \times (k - 1), i + Q \times (k - 1)} -  w_k^2(Z_k^T\tilde{R}_k^T\tilde{X}^T_0\tilde{X}_0\tilde{R}_k Z_k)_{i,i}]}.
\end{equation}
Then $E[||Y_0 - \tilde{X}_0 \hat{\beta}^{\text{Ens}}_{(M_{\text{Ens}})}||_2^2] \geq E[||Y_0 - \tilde{X}_0 \hat{\beta}^{\text{Merge}}_{(M)}||_2^2]$ when $\overline{\sigma}^2 \leq \tau_1.$

Suppose $$\min_d \sum_{i: \sigma^2_i = \sigma^2_{(d)}} \left[ \sum_{k=1}^K  \left(Z'^T\tilde{R}^T\tilde{X}_0^T\tilde{X}_0 \tilde{R}Z'\right)_{i + Q \times (k - 1), i + Q \times (k - 1)} -  w_k^2\left(Z_k^T\tilde{R}_k^T\tilde{X}^T_0\tilde{X}_0\tilde{R}_k Z_k\right)_{i,i}\right] > 0$$
and define 
\begin{equation}
\label{eqn:tau2}
\scriptsize
     \tau_2 = \frac{ \sigma^2_{\epsilon}(\sum_{k=1}^Kw_k^2 \text{\text{tr}}( \tilde{R}_k^T  \tilde{X}_0^T \tilde{X}_0  \tilde{R}_k)-\text{tr}( \tilde{R}^T \tilde{X}_0^T \tilde{X}_0  \tilde{R}))+( b^{\text{Ens}})^T b^{\text{Ens}} - ( b^{\text{Merge}})^T b^{\text{Merge}}}{P \min_{d} \frac{1}{J_d}\sum_{i: \sigma^2_i = \sigma^2_{(d)}} [ \sum_{k=1}^K  (Z'^T\tilde{R}^T\tilde{X}_0^T\tilde{X}_0 \tilde{R}Z')_{i + Q \times (k - 1), i + Q \times (k - 1)} -  w_k^2(Z_k^T\tilde{R}_k^T\tilde{X}^T_0\tilde{X}_0\tilde{R}_k Z_k)_{i,i}]}.
\end{equation}
Then $E[||Y_0 - \tilde{X}_0 \hat{\beta}^{\text{Ens}}_{(M_{\text{Ens}})}||_2^2] \leq E[||Y_0 - \tilde{X}_0 \hat{\beta}^{\text{Merge}}_{(M)}||_2^2]$ when $\overline{\sigma}^2 \geq \tau_2.$
\end{theorem}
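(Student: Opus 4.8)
The plan is to build on the bias--variance decomposition already carried out in the proof of Theorem~\ref{thm:1} and then replace the single-variance argument with a grouping-and-bounding step. First I would decompose each mean squared prediction error. Writing the test response as $Y_0 - f(\tilde{X}_0) = Z_0\gamma_0 + \epsilon_0$, which is independent of the training data and has mean zero, the MSPE of any linear estimator $\hat{\beta}$ splits as $E[\norm{Y_0 - f(\tilde{X}_0)}_2^2] + \norm{\text{bias}}_2^2 + \text{tr}(\text{Cov}(\tilde{X}_0\hat{\beta}))$. The irreducible term is common to merging and ensembling and cancels in the comparison, so it suffices to analyze the signed difference $\Delta \coloneqq E[\norm{Y_0 - \tilde{X}_0\hat{\beta}^{\text{Merge}}_{(M)}}_2^2] - E[\norm{Y_0 - \tilde{X}_0\hat{\beta}^{\text{Ens}}_{(M_{\text{Ens}})}}_2^2]$; ensembling is preferable exactly when $\Delta \geq 0$.

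Next I would expand the variance terms using $\Sigma = Z'G'Z'^{T} + \sigma_\epsilon^2 I$ for the merged fit and $\Sigma_k = Z_kGZ_k^{T} + \sigma_\epsilon^2 I$ for each study-specific fit, exploiting the independence of $Y_1,\dots,Y_K$ across studies for the ensemble covariance. Because $G$ and $G'$ are diagonal, each random-effects contribution to the trace separates into a sum over the $Q$ components weighted by the individual variances $\sigma_i^2$. This yields exactly the representation underlying Theorem~\ref{thm:1}, namely $\Delta = \sum_{i=1}^{Q}\sigma_i^2\,\delta_i - \mathcal{N}$, where $\mathcal{N}$ is the common numerator appearing in $\tau,\tau_1,\tau_2$ and $\delta_i = \sum_{k=1}^K(Z'^{T}\tilde{R}^{T}\tilde{X}_0^{T}\tilde{X}_0\tilde{R} Z')_{i+Q(k-1),i+Q(k-1)} - \sum_{k=1}^K w_k^2(Z_k^{T}\tilde{R}_k^{T}\tilde{X}_0^{T}\tilde{X}_0\tilde{R}_k Z_k)_{i,i}$ is the per-component between-study variability difference. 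Regrouping the sum by the distinct variance values gives $\sum_{i=1}^Q\sigma_i^2\delta_i = \sum_{d=1}^D\sigma_{(d)}^2 S_d$, with $S_d = \sum_{i:\sigma_i^2=\sigma_{(d)}^2}\delta_i$ the quantity appearing in the theorem statement.

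The crux of the argument is the following weighted-average bound. Since $\text{tr}(G) = \sum_{d=1}^D J_d\sigma_{(d)}^2 = P\overline{\sigma}^2$, I can write $\sum_{d}\sigma_{(d)}^2 S_d = \sum_d (J_d\sigma_{(d)}^2)(S_d/J_d)$ as a nonnegatively weighted combination of the ratios $S_d/J_d$ whose weights sum to $P\overline{\sigma}^2$. Bounding each ratio by its extreme value then gives $P\overline{\sigma}^2\min_d (S_d/J_d) \leq \sum_d \sigma_{(d)}^2 S_d \leq P\overline{\sigma}^2\max_d (S_d/J_d)$. For the first claim, the hypothesis $\max_d S_d > 0$ (equivalently $\max_d S_d/J_d > 0$, as $J_d>0$) makes $\tau_1$ well defined, and combining the upper bound with $\overline{\sigma}^2\leq\tau_1$ yields $\sum_d\sigma_{(d)}^2 S_d \leq \mathcal{N}$, i.e.\ $\Delta\leq 0$, which is the stated inequality $E[\norm{Y_0-\tilde{X}_0\hat{\beta}^{\text{Ens}}_{(M_{\text{Ens}})}}_2^2]\geq E[\norm{Y_0-\tilde{X}_0\hat{\beta}^{\text{Merge}}_{(M)}}_2^2]$. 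For the second claim, $\min_d S_d>0$ makes $\tau_2$ well defined with positive denominator, and the lower bound together with $\overline{\sigma}^2\geq\tau_2$ gives $\sum_d\sigma_{(d)}^2 S_d\geq\mathcal{N}$, i.e.\ $\Delta\geq 0$.

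The bookkeeping in the first two steps is routine and essentially inherited from Theorem~\ref{thm:1}; the step that requires the most care is the third. The main obstacle is to set up the convex-combination bound with the correct normalization---recognizing that the natural weights are $J_d\sigma_{(d)}^2$ rather than $\sigma_{(d)}^2$, so that they sum to $P\overline{\sigma}^2$ and reproduce the factor $P$ together with the averages $\frac{1}{J_d}\sum_{i:\sigma_i^2=\sigma_{(d)}^2}(\cdot)$ in $\tau_1,\tau_2$---and to verify that the sign hypotheses $\max_d S_d>0$ and $\min_d S_d>0$ are exactly what is needed to keep the denominators positive and preserve the direction of each inequality when multiplying through. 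Unlike Theorem~\ref{thm:1}, the unequal-variance case admits only one-sided bounds, which is precisely why the conclusion splits into the two separate thresholds $\tau_1$ and $\tau_2$ rather than a single transition point.
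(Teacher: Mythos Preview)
Your proposal is correct and follows essentially the same approach as the paper's proof: decompose the MSPE difference into the random-effects trace term $\sum_d \sigma_{(d)}^2 S_d$ minus the residual-noise/bias constant $c$ (your $\mathcal{N}$), then sandwich $\sum_d \sigma_{(d)}^2 S_d$ between $P\overline{\sigma}^2 \min_d(S_d/J_d)$ and $P\overline{\sigma}^2 \max_d(S_d/J_d)$ using the nonnegative weights $J_d\sigma_{(d)}^2$. Your treatment of the sign hypotheses and the role of the $J_d$ normalization is in fact more careful than the paper's appendix, which assumes all $a_d>0$ and contains a $\max$/$\min$ typo in the $\tau_2$ step.
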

A proof is provided in the appendix. Theorem \ref{thm:2} generalizes Theorem \ref{thm:1} to account for unequal variances along the diagonal of $G$. It characterizes a transition interval $[\tau_1, \tau_2]$ where merging outperforms ensembling when $\overline{\sigma}^2 \leq \tau_1$ and vice versa when $\overline{\sigma}^2 \geq \tau_2.$ The transition interval provided by \cite{guan2019merging} (cf. Theorem 2) is a special case of (\ref{eqn:tau1}, \ref{eqn:tau2}) when $M = \eta = 1$.
\subsection{Boosting with component-wise linear learners}
\label{sec:cwboost}
 To properly characterize the performance of the boosting estimator in \textbf{Algorithm 2}, we condition on its selection path. To this end, we provide the conditional MSE of the merged and ensemble estimators in Proposition \ref{prop:2}. Assuming $Y \sim MVN(\mu, \Sigma)$, it follows that $Y_k$ is normal with mean $\mu_k \coloneqq f(\tilde{X}_k)$ and covariance $\Sigma_k \coloneqq Z_kGZ_k^T + \sigma^2_{\epsilon}I$ for $k = 1,\ldots, K$. Let $$\mathcal{P} = \{Y: \Gamma Y \geq 0, z_j = z\}$$ and $$\mathcal{P}^{\text{Ens}} = \{\mathcal{P}_1, \ldots, \mathcal{P}_K\}$$ denote the conditioning events for the merged and ensemble estimators, respectively, where $$\mathcal{P}_k \coloneqq \{Y_k: \Gamma_k Y_k \geq 0, z_{jk} = z_k\}$$ summarizes the boosting path from fitting \textbf{Algorithm 2} to the data in study $k$.  Let $\bar{\mu}_j = v_j^T\mu$ and $\vartheta^2_j = v_j\Sigma v_j^T$ denote the mean and variance of $\hat{\beta}^{\text{CW, Merge}}_{(M)j}=v^T_jY$, respectively. And let $\alpha_j = \frac{a_j - \bar{\mu}_j}{\vartheta_j}$ and $\xi_j = \frac{b_j - \bar{\mu}_j}{\vartheta_j}$ denote the standardized lower and upper truncation limits. We denote the study-specific versions of $\bar{\mu}_j, \theta_j, \alpha_j$ and $\xi_j$ by $\bar{\mu}_{jk}, \theta_{jk}, \alpha_{jk},$ and $\xi_{jk},$ respectively.
\begin{proposition} Let $\phi(\cdot)$ and $\Phi(\cdot)$ denote the probability density and cumulative distribution functions of a standard normal variable, respectively. The conditional mean squared error (MSE) of the merged estimator is
\label{prop:2}
{\scriptsize 
\begin{align*}
     E\left[\left.\left(\hat{\beta}^{\text{Merge, CW}}_{(M)j} - \beta_j\right)^2\right|\mathcal{P}\right] &= \left( \bar{\mu}_j - \vartheta_j\left(\frac{\phi(\xi_j)-\phi(\alpha_j)}{\Phi(\xi_j) - \Phi(\alpha_j)}\right) - \beta_j\right)^2 \\
     &+ \vartheta^2_j \left(1 - \frac{\xi_j\phi(\xi_j) - \alpha_j\phi(\alpha_j)}{\Phi(\xi_j) - \Phi(\alpha_j)} - \left(\frac{\phi(\xi_j) - \phi(\alpha_j)}{\Phi(\xi_j) - \Phi(\alpha_j)}\right)^2\right).
\end{align*}
}
The conditional MSE of the ensemble estimator is
{\scriptsize
\begin{align*}
    E\left[\left.\left(\hat{\beta}^{\text{Ens, CW}}_{(M_{\text{Ens}})j} - \beta_j\right)^2\right|\mathcal{P}^{\text{Ens}}\right] &=
    \left(\sum_{k=1}^K w_k  \left(\bar{\mu}_{jk} - \vartheta_{jk}\left(\frac{\phi(\xi_{jk})-\phi(\alpha_{jk})}{\Phi(\xi_{jk}) - \Phi(\alpha_{jk})}\right)\right) - \beta_j\right)^2\\
  &+   \sum_{k=1}^K w_k^2 \vartheta^2_{jk} \left(1 - \frac{\xi_{jk}\phi(\xi_{jk}) - \alpha_{jk}\phi(\alpha_{jk})}{\Phi(\xi_{jk}) - \Phi(\alpha_{jk})} - \left(\frac{\phi(\xi_{jk}) - \phi(\alpha_{jk})}{\Phi(\xi_{jk}) - \Phi(\alpha_{jk})}\right)^2\right).\\ 
\end{align*}}
\end{proposition}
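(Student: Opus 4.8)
The plan is to prove both identities by a conditional bias-variance decomposition, writing
$E[(\hat{\beta} - \beta_j)^2 \mid \mathcal{P}] = (E[\hat{\beta} \mid \mathcal{P}] - \beta_j)^2 + \mathrm{Var}(\hat{\beta} \mid \mathcal{P})$,
and then substituting the closed-form first two moments of a truncated normal random variable. The two displayed lines in each formula are exactly this squared-bias-plus-variance split, so once the conditional mean and variance are in hand the result is immediate.

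For the merged estimator I would first invoke Lemma \ref{lemma:lee}, which gives $\hat{\beta}^{\text{Merge, CW}}_{(M)j} \mid \mathcal{P} \sim \text{TruncatedNormal}(\bar{\mu}_j, \vartheta_j^2, a_j, b_j)$ with standardized truncation limits $\alpha_j$ and $\xi_j$. Using the standard moments of a truncated normal, $E[\hat{\beta}^{\text{Merge, CW}}_{(M)j} \mid \mathcal{P}] = \bar{\mu}_j - \vartheta_j \frac{\phi(\xi_j) - \phi(\alpha_j)}{\Phi(\xi_j) - \Phi(\alpha_j)}$ and $\mathrm{Var}(\hat{\beta}^{\text{Merge, CW}}_{(M)j} \mid \mathcal{P}) = \vartheta_j^2 (1 - \frac{\xi_j \phi(\xi_j) - \alpha_j \phi(\alpha_j)}{\Phi(\xi_j) - \Phi(\alpha_j)} - (\frac{\phi(\xi_j) - \phi(\alpha_j)}{\Phi(\xi_j) - \Phi(\alpha_j)})^2)$. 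Substituting these into the decomposition yields the first display. If the moment formulas are not taken as known, they follow by integrating $w$ and $w^2$ against the truncated density $\phi((w - \bar{\mu}_j)/\vartheta_j)/[\vartheta_j(\Phi(\xi_j) - \Phi(\alpha_j))]$ and using $\phi'(t) = -t\phi(t)$ together with $\int \phi = \Phi$.

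The ensemble estimator requires the additional step of establishing conditional independence across studies. Here I would use that $\Sigma$ is block diagonal and that each study-specific quantity $v_{jk}$, $\Gamma_k$, and $z_{jk}$ is a function of $Y_k$ alone; consequently both the estimator $\hat{\beta}^{\text{CW}}_{(M)jk}$ and the conditioning event $\mathcal{P}_k$ are measurable with respect to $Y_k$ only. Since the blocks $Y_1, \ldots, Y_K$ are mutually independent, conditioning on $\mathcal{P}^{\text{Ens}} = \{\mathcal{P}_1, \ldots, \mathcal{P}_K\}$ leaves the truncated normals $\hat{\beta}^{\text{CW}}_{(M)jk}$, $k = 1, \ldots, K$, mutually independent. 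Linearity then gives $E[\hat{\beta}^{\text{Ens, CW}}_{(M_{\text{Ens}})j} \mid \mathcal{P}^{\text{Ens}}] = \sum_{k=1}^K w_k (\bar{\mu}_{jk} - \vartheta_{jk} \frac{\phi(\xi_{jk}) - \phi(\alpha_{jk})}{\Phi(\xi_{jk}) - \Phi(\alpha_{jk})})$, while independence annihilates the cross terms in the variance, leaving $\mathrm{Var}(\hat{\beta}^{\text{Ens, CW}}_{(M_{\text{Ens}})j} \mid \mathcal{P}^{\text{Ens}}) = \sum_{k=1}^K w_k^2 \mathrm{Var}(\hat{\beta}^{\text{CW}}_{(M)jk} \mid \mathcal{P}_k)$. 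Substituting the per-study truncated normal moments and reassembling the decomposition produces the second display.

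The main obstacle is precisely this conditional-independence argument: one must verify carefully that each $\mathcal{P}_k$ and each $\hat{\beta}^{\text{CW}}_{(M)jk}$ depends only on $Y_k$, so that conditioning on the joint event $\mathcal{P}^{\text{Ens}}$ factorizes into independent per-study truncated normals and the variance cross terms genuinely vanish. Everything else reduces to a routine substitution of the standard truncated normal moments into the conditional bias-variance decomposition.
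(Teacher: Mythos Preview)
Your proposal is correct and follows essentially the same route as the paper: a conditional bias--variance decomposition combined with the truncated-normal moment formulas supplied by Lemma~\ref{lemma:lee}, and for the ensemble the reduction of $\mathcal{P}^{\text{Ens}}$ to the per-study events $\mathcal{P}_k$ together with independence across studies to kill the variance cross terms. If anything, you are more explicit than the paper about justifying the conditional-independence step; the paper's proof simply writes $\mathrm{Var}(\sum_k w_k \hat{\beta}^{\text{CW}}_{(M_k)jk}\mid \mathcal{P}^{\text{Ens}})=\sum_k w_k^2 \mathrm{Var}(\hat{\beta}^{\text{CW}}_{(M_k)jk}\mid \mathcal{P}_k)$ and $E(\hat{\beta}^{\text{CW}}_{(M_k)jk}\mid \mathcal{P}^{\text{Ens}})=E(\hat{\beta}^{\text{CW}}_{(M_k)jk}\mid \mathcal{P}_k)$ without further comment.
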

A proof is provided in the appendix. Proposition \ref{prop:2} characterizes the conditional MSE of boosting estimators via the bias-variance decomposition. By the polyhedral lemma (\cite{lee2016exact}), the selection path $\Gamma Y \geq 0$ is equivalent to truncating $\hat{\beta}^{\text{Merge, CW}}_{(M)j} = v_j^TY$ to an interval $[a_j, b_j]$ around $z_j$. When there is no between-study heterogeneity, $z_j = (I-v_j(v_j^Tv_j)^{-1}v^T)Y$ is the residual from projecting $Y$ onto $v_j$. Loosely speaking, the selection path is equivalent to $v_j^TY$ not deviating too far from $z_j$. As shown in Section \ref{performance}, we can rewrite the selection path as a system of $2M(P-1)$ inequalities with the variable $v_j^TY$:
\begin{equation}
\label{eqn:linearSys}
    \{\Gamma Y \geq 0\} = \{\Gamma c_j (v_j^TY) \leq - \Gamma z_j\}.
\end{equation}
For fixed $P$, as the number of boosting iterations $M$ increases, the number of linear inequalities (or constraints) in (\ref{eqn:linearSys}) also increases; as a result, the size of the polyhedron $\Gamma Y \geq 0$ decreases. A smaller polyhedron generally leads to a narrower truncation interval $[a_j, b_j]$ around $v_j^TY$. Intuitively, a tighter truncation interval leads to reduced variance. When between-study heterogeneity is low, at a fixed learning rate $\eta$, the merged model generally requires a later stopping iteration than the study-specific model due to the increase in sample size. Therefore, $\hat{\beta}^{\text{Merge, CW}}_{(M)j}$ tends to have a tighter truncation region, and as a result, smaller variance than $\hat{\beta}^{\text{Ens, CW}}_{(M_{\text{Ens}})j}$. As between-study heterogeneity increases, the merged model often has an earlier stopping iteration to avoid overfitting, so $Var(\hat{\beta}^{\text{Merge, CW}}_{(M)j}) > Var(\hat{\beta}^{\text{Ens, CW}}_{(M_{\text{Ens}})j})$. In practice, the variance component in Proposition \ref{prop:2} can be computed given estimates of $\sigma^2$ and $f$.

\iffalse
By Proposition \ref{prop:1}, we can write $\hat{\beta}^{\text{CW, Merge}}_{(M)} \in \mathbb{R}^P$ as a linear function of $Y$, i.e.,
$\hat{\beta}^{\text{CW, Merge}}_{(M)} &= V^TY$,
where $V \coloneqq (\sum_{m=1}^M \eta B_{(m)}(\prod_{\ell = 0}^{m-1} (I - \eta H_{(m-\ell-1)})))^T$. We decompose $ Y$ into
$$ Y =  C( V^T Y) +  Z^*,$$
where $$ C =  \Sigma  V\left( V^T \Sigma  V\right)^{-1},$$
is a $N-$dimensional vector and 
$$ Z^* = ( I -  \Sigma  V\left( V^T \Sigma  V\right)^{-1}  V^T) Y$$
is a $2M(P-1)$-dimensional vector. Because $V$ is fixed once we condition on the boosting path, it follows that $\hat{\beta}^{\text{CW, Merge}}_{(M)}$ conditional on the boosting  path follows a multivariate truncated distribution. We provide an expression for the truncation region in Claim \ref{claim:1} in the appendix. Briefly, we show that the boosting path can be re-written as a set in $\mathbb{R}^P$ where each component of $\hat{\beta}^{\text{CW, Merge}}_{(M)}$ is truncated to an interval that depends on $Z^*$. Because the truncation region is non-rectangular, the distribution of $\hat{\beta}^{\text{CW, Merge}}_{(M)}$ conditional on the boosting path is analytically intractable. 
\fi
\section{Simulations}
We conducted simulations to evaluate the performance of boosting with four base learners: ridge, component-wise least squares (CW-LS), component-wise cubic smoothing splines (CW-CS) and regression trees. We sampled predictors from the \texttt{curatedOvarianData} R package (\cite{ganzfried2013curatedovariandata}) to reflect realistic and potentially heterogeneous predictor distributions. The true data-generating model contains $p=10$ predictors of which $5$ have random effects. The outcome for individual $i$ in study $k$ is 
\begin{equation}
    Y_{ik} = f(X_{ik}) + Z_{ik} \gamma_k + \epsilon_{ik},
\end{equation}
where $\gamma_k \sim MVN(0, G)$ with $G = diag(\sigma^2_1, \ldots, \sigma^2_5)$, $Z_{ik} = (X_{3ik}, X_{4ik}, X_{5ik}, X_{6ik}, X_{7ik})$, and $\epsilon_{ik} \sim N(0, \sigma^2_{\epsilon})$ with $\sigma^2_{\epsilon} = 1$ for $i = 1, \ldots, n_k, k = 1, \ldots, K.$ The mean function $f$ has the form
\begin{small}
\begin{align}
 \label{eq:simfun}
     f(X_{ik}) &= -0.28 h_{11}(X_{1ik}) -0.12 h_{21}(X_{1ik}) -0.78 h_{31}(X_{1ik}) +0.035 h_{41}(X_{1ik}) -0.23X_{2ik} \nonumber\\
     &+1.56 X_{3ik} -0.0056 X_{4ik} + 0.13 X_{5ik} +0.0013 X_{6ik} - 0.00071 X_{7ik} - 0.0023 X_{8ik} \nonumber\\
     &-0.69 X_{9ik} + 0.016 X_{10ik}
\end{align}
\end{small}
where $h_{11}, \ldots, h_{41}$ are cubic basis splines with a knot at 0, and the coefficients were generated from $N(0, 0.5)$. The coefficients for $X_{2ik}, X_{3ik}, X_{5ij}$ and $X_{9ik}$ were generated from $N(0, 1)$, and those for $X_{4ik}, X_{6ik}, X_{7ik}, X_{8ik}, $ and $X_{10ik}$ were generated from $N(0, 0.01).$

We generated $K = 4$ training and $V = 4$ test studies of size $100$. For each simulation replicate $s = 1, \ldots, 500$, we generated outcomes for varying levels of $\overline{\sigma}^2$, trained merged and multi-study ensemble boosting models and evaluated them on the test studies. The outcome was centered to have zero mean, and predictors were standardized to have zero mean and unit $\ell_2$ norm. The regularization parameter $\lambda$ for ridge boosting and stopping iteration $M$ for tree boosting were chosen using 3-fold cross validation. The stopping iteration for linear base learners (ridge, CW-LS, and CW-CS) were chosen based on the $AIC_c$-tuning procedure described in Section \ref{subsection:earlystop}. All hyperparameters were tuned on a held-out data set of size 400 with $\sigma^2$ set to zero. For tree boosting, we set the maximum tree-depth to two. A learning rate of $\eta = 0.5$ was used for all boosting models. For the ensemble estimator, equal weight was assigned to each study. We considered two cases for the structure of $G:$ 1) equal variance and 2) unequal variance. In the first case, Figure \ref{fig:1} shows the relative predictive performance comparing multi-study ensembling to merging for varying levels of $\overline{\sigma}^2$. When $\overline{\sigma}^2$ was small, the merged learner outperformed the ensemble learner. As $\overline{\sigma}^2$ increased, there exists a transition point beyond which ensembling outperformed merging. The empirical transition point based on simulation results confirmed the theoretical transition point (\ref{eqn:tau}) for boosting with linear learners. As $\overline{\sigma}^2$ tended to infinity, the log relative performance ratio tended to $-0.81$ by Corollary \ref{cor:1}. Figure \ref{fig:2} shows the relative predictive performance under the unequal variance case. For boosting with linear learners, there exists a transition interval $[\tau_1, \tau_2]$ where merging outperformed ensembling when $\overline{\sigma}^2 \leq \tau_1$ and vice versa when $\overline{\sigma}^2 \geq \tau_2.$ Compared to boosting with linear or tree learners, boosting with component-wise learners had an earlier transition point.   

For boosting with component-wise linear learners, we compared the performance of merging and multi-study ensembling based on results in Proposition \ref{prop:2}. In each simulation replicate, we generated outcomes based on (\ref{eq:simfun}) and estimated $\beta^{\text{CW, Merge}}_{(M)}$ and  $\beta^{\text{CW, Ens}}_{(M)}$ with $M$ set to 30. We assumed equal variance along the diagonal of $G$. At each boosting iteration $m= 1,...,M$, we evaluated the MSE for both estimators with respect to $\beta_6= 1.72$ conditional on the boosting path up to iteration $m$. We chose to evaluate the coefficient associated with $X_6$ because the true data-generating coefficient $\beta_6$ had the largest magnitude, and as a result, the component-wise boosting algorithm was more likely to select $X_6$. Figure \ref{fig:3} shows the MSE associated with the merged and ensemble estimators at $\overline{\sigma}^2= 0.01$ and 0.05. We chose these values because the empirical transition point for boosting with component-wise linear learners in Figure \ref{fig:1} lies between 0.01 and 0.05. When $\overline{\sigma}^2$ = 0.01, merging outperformed ensembling. As the number of boosting iterations increased, both performed similarly. At $\overline{\sigma}^2 = 0.05,$ merging outperformed ensembling up until $M = 20$, beyond which ensembling began to show preferable performance.

\section{Breast Cancer Application}
Using data from the \texttt{curatedBreastData} R package (\cite{curatedBreastData}), we illustrated how the transition point theory could guide decisions on merging vs. ensembling.  This R package contains 34 high-quality gene expression microarray studies from over 16 clinical trials on individuals with breast cancer. The studies were normalized and  post-processed using the \texttt{processExpressionSetList()} function. In practice, a key determinant of breast cancer prognosis and staging is tumor size (\cite{fleming1997ajcc}). Clinicians use the TNM (tumor, node, metastasis) system to describe how extensive the breast cancer is. Under this system, "T" plus a letter or number (0 to 4) is used to describe the size (in centimeters (cm)) and location of the tumor. While the best way to measure the tumor is after it has been removed from the breast, information on tumor size can help clinicians develop effective treatment strategies. Common treatment options for breast cancer include surgery (e.g., mastectomy or lumpectomy), drug therapy (e.g., chemotherapy or immunotherapy) or a combination of both (\cite{gradishar2021nccn}). 

In our data illustration, the goal was to predict tumor size (cm) before treatment and surgery. We trained boosting models on $K = 5$ training studies with a combined size of $N = 643$: ID 1379 ($n = 60$), ID 2034 ($n = 281$), ID 9893 ($n = 155$), ID 19615 ($n = 115$) and ID 21974 ($n = 32$) and evaluated them on $V = 4$ test studies with a combined size of $N^{\text{Test}} = 366$: ID 21997 ($n = 94$), ID 22226 ($n = 144$), ID 22358 ($n = 122$), and ID 33658 ($n = 10$).  We selected the top $p = 40$ gene markers that were most highly correlated with tumor size in the training studies as predictors and randomly selected $q=8$ to have random effects with unequal variance. To calculate the transition interval from Theorem \ref{thm:2}, we trained boosting models with ridge learners using two strategies: merging and ensembling. We also estimated the variances of the random effects ($\sigma^2_1, \ldots, \sigma^2_8)$ and residual error ($\sigma^2_{\epsilon}$) by fitting a linear mixed effects model using restricted maximum likelihood. The estimate of $\overline{\sigma}^2$ and $\sigma^2_{\epsilon}$ were $4.32\times 10^{-2}$ and 1.053, respectively, and the transition interval was $[0.020, 0.026]$. In addition to ridge regression, we trained boosting models with three other base learners: CW-LS, CW-CS and regression trees. Results comparing the predictive performance of ensembling vs. merging are shown in Figure \ref{fig:4}. By Theorem \ref{thm:2}, merging would be preferred over ensembling for boosting with ridge learners because the estimate of $\overline{\sigma}^2$ was smaller than the lower bound of the transition interval. This result was corroborated by the boxplot of performance ratios in Figure \ref{fig:4}. 

Among the boosting algorithms that perform variable selection, ensembling outperformed merging when boosting with regression trees, and both performed similarly when boosting with component-wise learners. Table \ref{tab:1} summarizes the top three genes selected by each algorithm. Genes were ordered by decreasing variable importance, which was defined as the reduction in training error attributable to selecting a particular gene. In the merged study, both boosting with CW-CS and trees selected the same three genes: \textit{S100P, MMP11,} and \textit{E2F8}, whereas boosting with CW-LS selected \textit{S100P, ASPN,} and \textit{STY1}. This may be attributed to the fact that, compared to CW-LS, CW-CS and trees are more flexible and can capture non-linear trends in the data. Overall, there was some overlap in the genes that were selected by the three base learners across studies. In study ID 1379, all three base learners selected \textit{S100P}, and all but the tree learner selected \textit{AEBP1}. In studies ID 9893, 19615 and 21974, all three learners selected \textit{PPP1R3C}, \textit{CD9}, and \textit{CD69}, respectively. Tree boosting selected a single gene in studies ID 1379, 9893, and 21974 because the optimal number of boosting iterations determined by 3-fold CV was one. In general, CV-tuning leads to earlier stopping iterations than $AIC_c$-tuning as CV approximates the test error on a smaller sample. 

\section{Discussion}
In this paper, we studied boosting algorithms in a regression setting and compared merging and multi-study ensembling for improving cross-study replicability of predictions. We assumed a flexible mixed effects model with potential heterogeneity in predictor-outcome relationships across studies and provided theoretical guidelines for determining whether it was more beneficial to merge or to ensemble. In particular, we extended the transition point theory from \cite{guan2019merging} to boosting with linear learners. For boosting with component-wise linear learners, we characterized a bias-variance decomposition of estimation error conditional on the selection path. 

Boosting under $\ell_2$ loss is computationally simple and analytically attractive. In general, performance of the algorithm is inextricably linked with the choice of learning rate $\eta$ and stopping iteration $M.$ Common tuning procedures include $AIC_c$ tuning, cross-validation, and restricting the total step size (\cite{zhang2005boosting}). When both $\eta$ and $M$ are set to one, the transition point results on boosting coincide with those on ordinary least squares and ridge regression from \cite{guan2019merging}. A smaller $\eta$ corresponds to increased shrinkage of the effect estimates and decreased complexity of the boosting fit. For fixed $M$, decreasing $\eta$ results in a smaller transition point $\tau$, suggesting that multi-study ensembling would be preferred over merging at a lower threshold of heterogeneity. This can be attributed to the fact that for a fixed $M$, merging would require a larger $\eta$ due to the increase in sample size. Because of the interplay between $\eta$ and $M$, for a fixed $\eta$, decreasing $M$ also leads to a smaller $\tau.$ \cite{buehlmann2006boosting} noted that a smaller $\eta$ resulted in a weaker learner with reduced variance, and this was empirically shown to be more successful than a strong learner. %\cite{buehlmann2006boosting} and \cite{tutz2007boosting} characterized an exponential bias-variance trade off with the variance term increasing very slowly as $M$ tends to infinity, shedding light on $\ell_2$ boosting's resistance to overfitting. 

We focused on $\ell_2$ boosting with linear learners for the opportunity to pursue closed-form solutions. With an appropriate choice of basis function, these learners can in theory
approximate any sufficiently smooth function to any level of precision (\cite{stone1948generalized}). In our simulations, the empirical transition points of boosting with ridge learners and boosting with regression trees were similar, suggesting that in certain scenarios it may be reasonable to consider the transition point theory in Theorems \ref{thm:1} and \ref{thm:2} as a proxy when comparing merging and ensembling for boosted trees. It is important to note, however, that such an approximation may not be warranted in settings where the choice of hyperparameters differ from that of our simulations. Although this paper focuses on boosting algorithms, we acknowledge important connections with other machine learning methods. A close relative of boosting with component-wise linear learners is the incremental forward stagewise algorithm (FS), which selects the covariate most correlated (in absolute value) with the residuals $r_{(m)}$ (\cite{efron2004least}). Because the covariates are standardized, both algorithms lead to the same variable selection for a given $r_{(m)}$. % \cite{freund2017new} provided a unifying view of boosting, FS, and Lasso via subgradient optimization, and \cite{friedman2017elements} showed that FS coincides with Lasso as $\eta \rightarrow 0+$.

A potential limitation of Theorems \ref{thm:1} and \ref{thm:2} is that the tuning parameters (e.g., $\eta$ and $M$) are treated as fixed. These quantities are typically chosen by tuning procedures that introduce additional variability. Although we assumed the same $\eta$ for merging and ensembling in simulations, the transition point $\tau$ can be estimated with different values of $\eta$, which may be more realistic in practice. For the ensembling approach, we assigned equal weight to each study, which is equivalent to averaging the predictions. The equal-weighting strategy is a special case of stacking (\cite{breiman1996stacked, ren2020cross}) and is preferred in settings where studies have similar sample sizes. 

Many areas of biomedical research face a replication crisis in which scientific studies are difficult or impossible to replicate (\cite{ioannidis2005most}). An equally important but less commonly examined issue is the replicability of prediction models. To improve cross-study replicability of predictions, our work provides a theoretical rationale for choosing multi-study ensembling over merging when between-study heterogeneity exceeds a well-defined threshold. As many areas of science are becoming data-rich, it is critical to simultaneously consider and systematically integrate multiple studies to improve cross-study replicability of predictions. 

\clearpage

\begin{section}{Tables and Figures}

\begin{figure}[b]
    \includegraphics[scale=0.37]{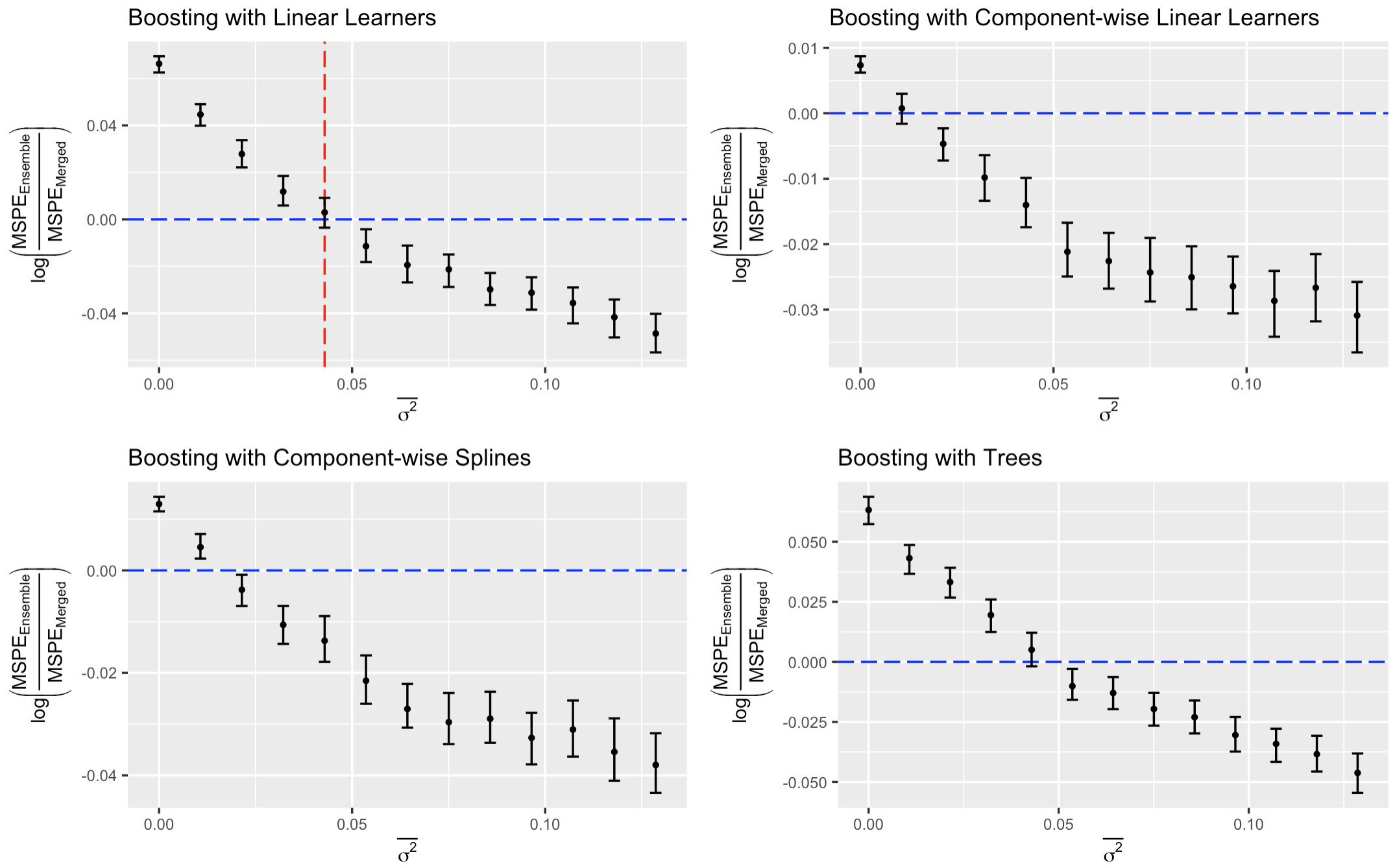}
    \caption{Log relative mean squared prediction error (MSPE) of multi-study ensembling vs. merging for boosting with different base learners under the equal variance assumption. The red vertical dashed line indicates the transition point $\tau$. The solid circles represent the average performance ratios comparing multi-study ensembling to merging, and vertical bars the 95\% bootstrapped intervals.}
    \label{fig:1}
\end{figure}
\begin{figure}[h!]
    \includegraphics[scale=0.37]{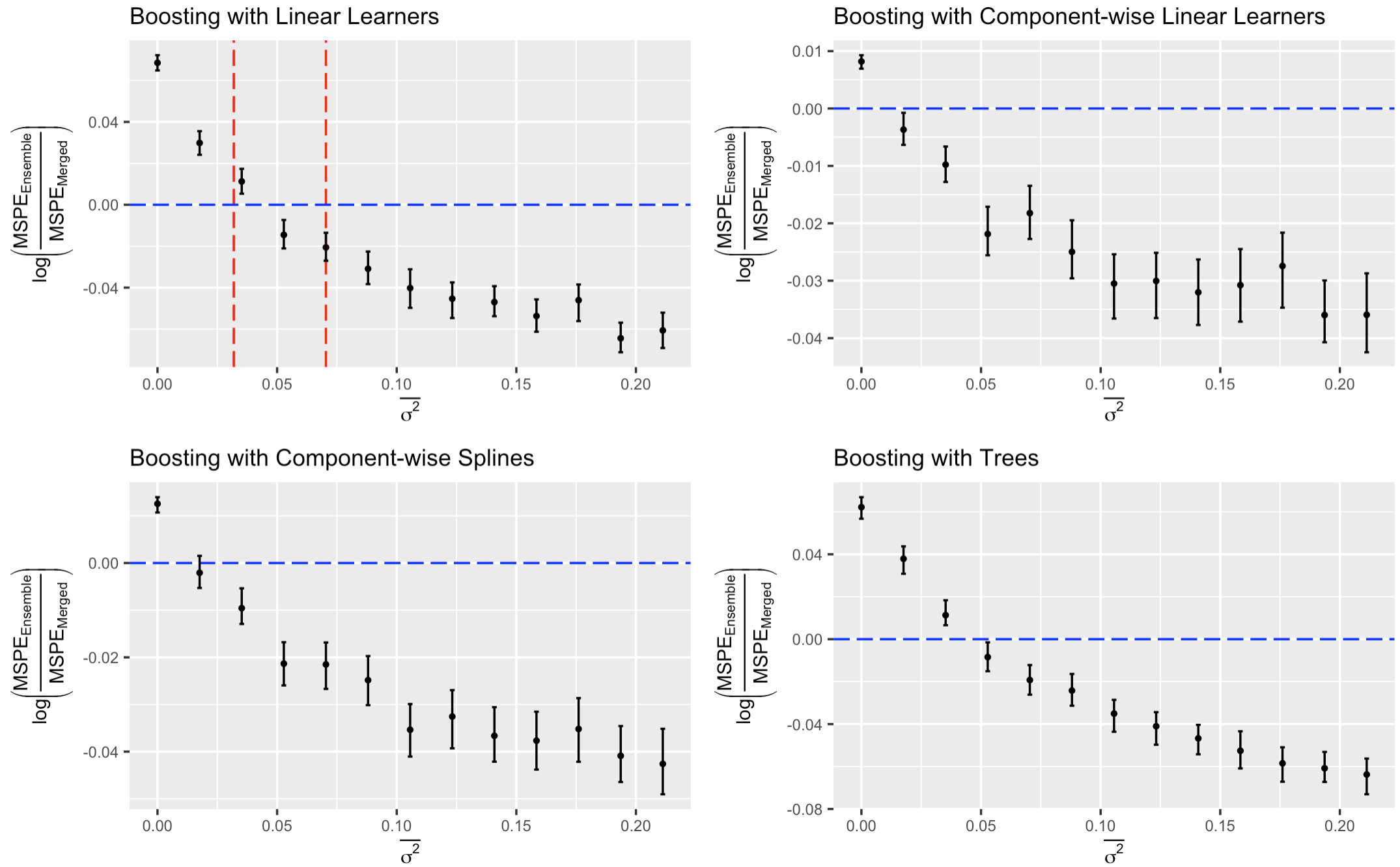}
     \caption{Log relative mean squared prediction error (MSPE) of multi-study ensembling vs. merging for boosting with different base learners under the unequal variance assumption. The red vertical dashed lines indicates the transition interval $[\tau_1, \tau_2]$. The solid circles represent the average performance ratios comparing multi-study ensembling to merging, and vertical bars the 95\% bootstrapped intervals.}
    \label{fig:2}
\end{figure}
\clearpage
\begin{figure}[h!]
    \centering
    \includegraphics[scale=0.35]{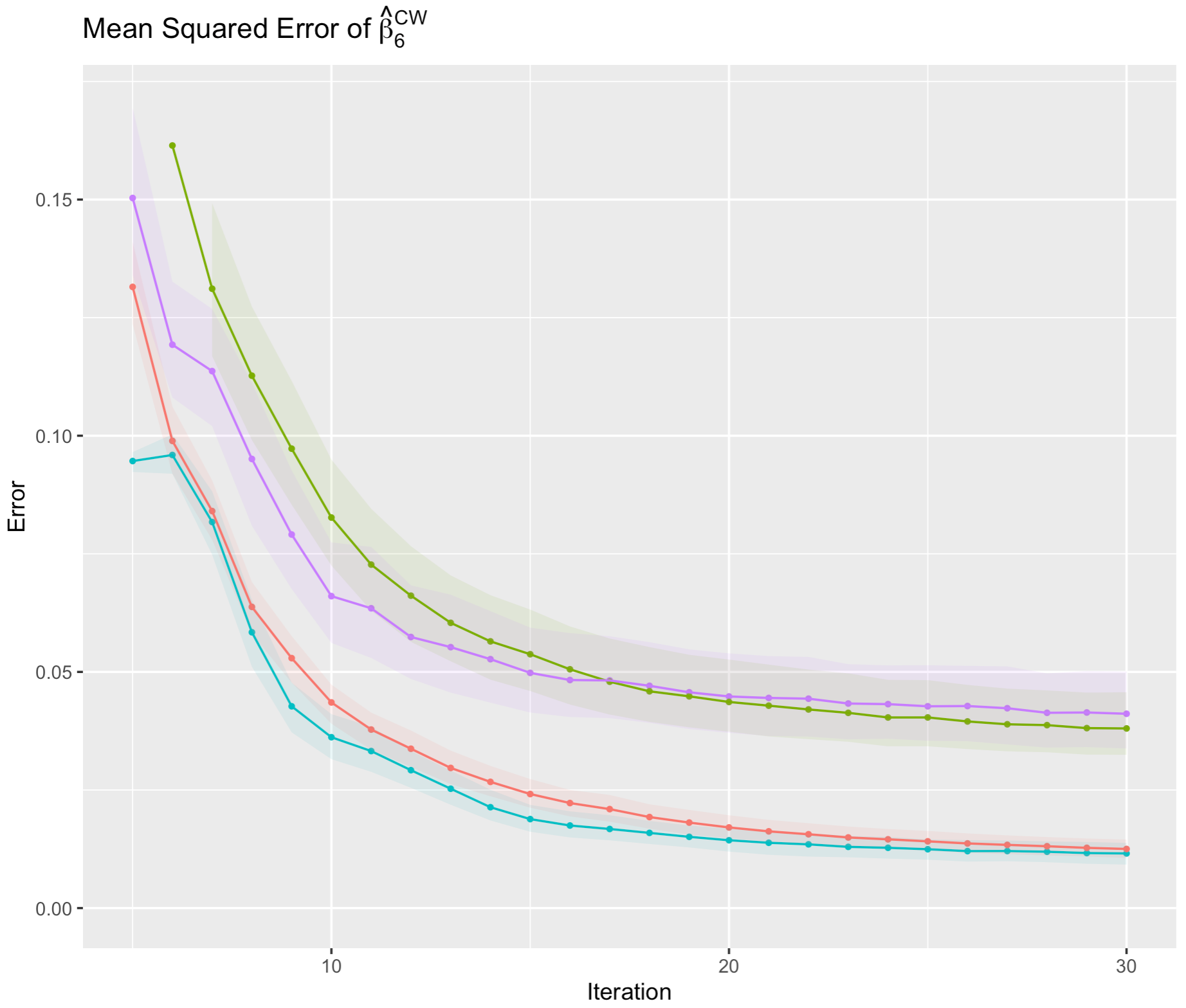}
    \caption{Mean squared error associated with merging and ensembling at different levels of $\overline{\sigma}^2$. Blue and red lines correspond to the merged and ensemble estimators at $\overline{\sigma}^2 = 0.01$, respectively. Purple and green lines correspond to the merged and ensemble estimators at $\overline{\sigma}^2 = 0.05$, respectively.}
    \label{fig:3}
\end{figure}

\begin{figure}[h!]
    \includegraphics[scale=0.37]{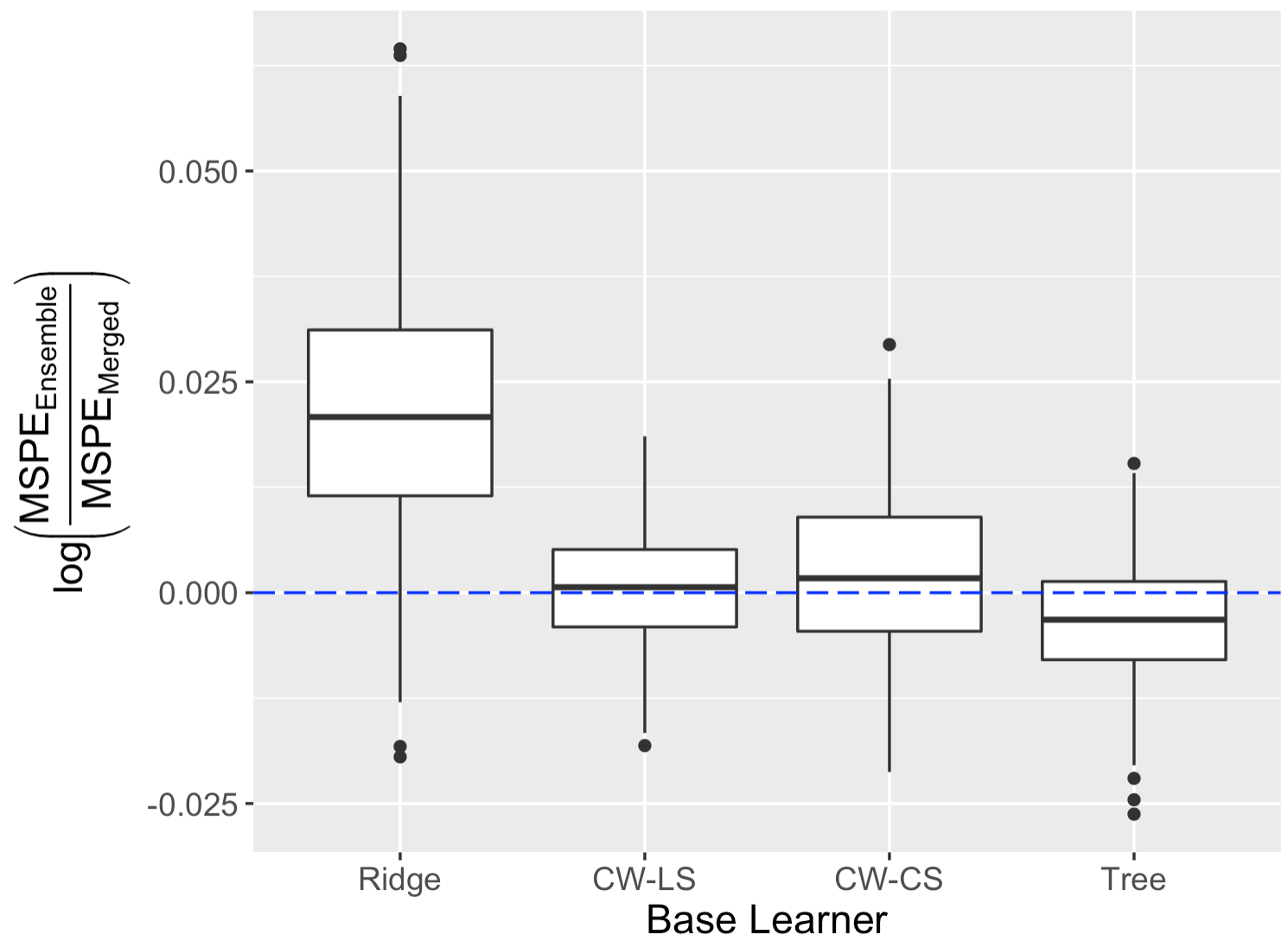}
    \caption{Log relative mean squared prediction error (MSPE) of multi-study ensembling vs. merging for boosting with different base learners under the equal variance assumption. Ridge = ridge regression; CW-LS = component-wise least squares; CW-CS = component-wise cubic smoothing splines; tree = regression tree.}
    \label{fig:4}
\end{figure}

\begin{table}[ht]
\label{tab:1}
\resizebox{\columnwidth}{!}{
\begin{tabular}{cllllllll}
  \hline
Learner & ID 1379 & ID 2034 & ID 9893 & ID 19615 & ID 21974 & Merged \\ 
  \hline
CW-LS & S100P (0.135) & MMP11 (0.0455) & PPP1R3C (0.0421) & CENPN (0.111) & CD69 (0.193) & S100P (0.0215) \\ 
   & AEBP1 (0.129) & CENPA (0.0241) & IGF1 (0.0208) & CD9 (0.0767) & MMP11 (0.108) & ASPN (0.0184) \\ 
   &CENPA (0.0652) & CAMP (0.0204) & SYT1 (0.0183) & ASPN (0.0733) & ESR1 (0.0358) & SYT1 (0.0133) \\ 
   \hline
CW-CS & AEBP1 (0.133) & TNFSF4 (0.0477) & PPP1R3C (0.0463) & CENPN (0.103) & MMP11 (0.183) & S100P (0.021) \\ 
   & C10orf116 (0.115) & S100A9 (0.0405) & GRP (0.0342) & CD9 (0.0865) & CD69 (0.182) & MMP11 (0.0195) \\ 
   & S100P (0.100) & CLU (0.0321) & POSTN (0.0256) & COL1A1 (0.0848) & S100P(0.0889) & E2F8 (0.0185) \\ 
    \hline
Tree & S100P (0.111) & S100A9 (0.0699) & PPP1R3C (0.0438) & COL1A1 (0.131) & CD69 (0.147) & MMP11 (0.0286) \\ 
   & N/A & MMP11 (0.0588) & N/A & CD9 (0.108) & N/A & S100P (0.0266) \\ 
   & N/A & N/A & N/A & ADRA2A (0.0732) & N/A & E2F8 (0.0249) \\ 
   \hline
\end{tabular}
}
\caption{Selected genes ordered by decreasing variable importance across different training studies. Each entry in the table consists of the gene name followed by the amount of reduction in training error that is attributed to selecting the gene in parentheses. An entry is N/A if there were fewer than three selected genes. CW-LS = component-wise least squares and CW-CS = component-wise cubic smoothing splines.}
\end{table}
\end{section}
\clearpage
\begin{section}{Appendix}
\begin{proof}[Proof of Proposition \ref{prop:1}]
We show $r_{(m)} = \prod_{\ell=0}^{m-1}\left(I -  \eta H_{(m-\ell-1)}\right) Y$ by induction. Without loss of generality, we assume $\eta = 1$. At iteration 1, the residual vector is
\begin{align*}
     r_{(1)} &=  Y - \hat{ Y}_{(0)}\\
    &= \left( I -  H_{(0)}\right) Y
\end{align*}

At iteration $m-1$, we assume the induction hypothesis:
\begin{align}
     r_{(m-1)} &=  \prod_{\ell=0}^{m-2}\left( I -  H_{(m-\ell-1)}\right) Y \label{eqstar}
\end{align}
At iteration $m,$ the residual vector is
\begin{align*}
     r_{(m)} &=  Y - \hat{ Y}_{(m-1)}\\
    &=  Y - \left(\hat{ Y}_{(m-2)} +  H_{(m-1)} r_{(m-1)}\right)\\
    &=  r_{(m-1)} -  H_{(m-1)} r_{(m-1)}\\
    &= \left( I -  H_{(m-1)}\right) r_{(m-1)}\\
    &\overset{(\ref{eqstar})}{=} \left( I -  H_{(m-1)}\right)\left( I -  H_{(m-2)}\right)\cdots \left( I -  H_{(1)}\right)\left( I -  H_{(0)}\right)  Y\\
    &= \prod_{\ell=0}^{m-1}\left( I -  H_{(m-\ell-1)}\right) Y.
\end{align*}
It follows that $\left(\tilde{X}_{\hat{j}_{(m)}}^T \tilde{X}_{j_{m}} \right)^{-1}\tilde{X}^T_{\hat{j}_{(m)}}r_{(m)} \in \mathbb{R}$ is the coefficient estimate of $\tilde{X}_{\hat{j}_{(m)}}$. Multiplying the coefficient estimate by $e_{\hat{j}_{(m)}} \in \mathbb{R}^P$ results in an $U$-dimensional vector with $\left(\tilde{X}_{\hat{j}_{(m)}}^T \tilde{X}_{j_{m}} \right)^{-1}\tilde{X}^T_{\hat{j}_{(m)}}r_{(m)}$ in the $\hat{j}_{(m)}$-th position and 0 everywhere else. The final coefficient estimates are given by the sum across iteration-specific vectors $e_{\hat{j}_{(m)}}\left(\tilde{X}_{\hat{j}_{(m)}}^T \tilde{X}_{j_{m}}\right)^{-1}\tilde{X}^T_{\hat{j}_{(m)}}r_{(m)}$ for $m = 1, \ldots, M.$ 
\end{proof}
\clearpage
\begin{proof}[Proof of Lemma \ref{lemma:lee}]
We decompose $Y$ into

$$Y = c_j(v_j^TY) + z_j$$

and rewrite the polyhdron as

\begin{align*}
    \{\Gamma Y \geq  0\} &= \left\{ \Gamma \left( c_j  v_j^T Y +  z_j\right) \geq  0\right\}\\
     &= \left\{ \Gamma  c_j  (v_j^T  y) \geq  0 -  \Gamma  z_j\right\}\\
    & = \left\{\left( \Gamma  c_j\right)_{\ell} \left( v_j^T Y\right) \geq 0- ( \Gamma  z_j)_{\ell} \quad \text{for all }\ell = 1, \ldots, 2M(P-1) \right\}\\
    &= \begin{Bmatrix}
 v_j^T Y \geq \frac{ 0 - ( \Gamma  z_j)_{\ell}}{(\Gamma c_j)_{\ell}}, & \text{for } \ell:( \Gamma  c_j)_{\ell} > 0 \\ 
 v_j^T Y \leq \frac{0 - ( \Gamma  z_j)_{ \ell}}{( \Gamma  c)_i}, & \text{for }  \ell:( \Gamma  c_j)_{ \ell} < 0 \\ 
 0 \geq  0 - ( \Gamma  z_j)_{ \ell} & \text{for } \ell:( \Gamma  c_j)_i = 0 
\end{Bmatrix}\\
&= \begin{Bmatrix}
 v_j^T Y \geq \max\limits_{ \ell:( \Gamma  c)_{ \ell} > 0} \frac{ 0 - ( \Gamma  z_j)_{ \ell}}{( \Gamma  c)_i}\\
v_j^T Y \leq \min\limits_{ \ell:( \Gamma  c_j)_{ \ell} < 0}\frac{0 - ( \Gamma  z_j)_{ \ell}}{( \Gamma  c_j)_{ \ell}}\\
 0 \geq \max\limits_{\ell:( \Gamma  c)_{ \ell} = 0}  0 - ( \Gamma  z_j)_{ \ell}
\end{Bmatrix}
\end{align*}
where in the last step, we have divided the components into three categories depending
on whether $(\Gamma c_j)_{\ell} \lesseqgtr 0$, since this affects the direction of the inequality (or
whether we can divide at all). Since $v_j^TY$ is the same quantity for all $\ell$, it must be
at least the maximum of the lower bounds, which is $a_j$, and no more than the
minimum of the upper bounds, which is $b_j.$ Since $a_j, b_j,$ and $c_j$ are independent of $v_j^TY,$ then $v_j^TY$ is conditionally a normal random variable, truncated to be between $a_j$ and $b_j.$ By conditioning on the value of $z_j,$ $$v_j^TY|\{\Gamma Y \geq 0, z_j = z\}$$
is a truncated normal.

\end{proof}
\clearpage
\begin{proof}[Proof of Theorems \ref{thm:1} and \ref{thm:2}]
\begin{flalign*}
    Bias\left(\tilde{X}_0\hat{ \beta}_{(M)}^{\text{Merge}}\right) &= E\left(\tilde{X}_0\sum_{m=1}^M  \eta B \left( I - \eta H\right)^{m-1} Y\right) -  f(\tilde{X}_0)\\
    &= \tilde{X}_0\tilde{R}f(\tilde{X}) -  f(\tilde{X}_0)\\
    Bias\left(\tilde{X}_0\hat{ \beta}_{(M)}^{\text{Ens}}\right) &= E\left(\tilde{X}_0\sum_{k=1}^K w_k \left[\sum_{m=1}^M  \eta B_{k} \left( I - \eta H_{k}\right)^{m-1} Y_k \right]\right) -  f(\tilde{X}_0)&\\
    &= \sum_{k=1}^K w_k \tilde{X}_0 \tilde{R}_k  f(X_k) -  f(\tilde{X}_0)\\
    Cov\left(\tilde{X}_0\hat{ \beta}^{\text{Merge}}_{(M)}\right) &= Cov\left(\tilde{X}_0\sum_{m=1}^M  \eta B \left( I - \eta H\right)^{m-1} Y\right)\\
    &=  \tilde{X}_0\tilde{R} Cov( Y)  \tilde{R}^T \tilde{X}^T_0\\
    &=  \tilde{X}_0\tilde{R} \text{blkdiag}\left(\left\{Cov\left( Y_k\right)\right\}_{k=1}^K\right) \tilde{R}^T\tilde{X}^T_0\\
     &=  \tilde{X}_0\tilde{R} \text{blkdiag}\left(\left\{ Z_k  G  Z_k^T + \sigma^2_{\epsilon} I\right\}_{k=1}^K\right)  \tilde{R}^T \tilde{X}^T_0\\
    Cov\left(\tilde{X}_0\hat{ \beta}^{\text{Ens}}_{(M)}\right) &= Cov\left(\tilde{X}_0\sum_{k=1}^K  w_k \left[\sum_{m=1}^M  \eta B_{k} \left( I - \eta H_{k}\right)^{m-1} Y_k \right]\right)\\
    &= Cov\left(\tilde{X}_0\sum_{k=1}^K w_k  \tilde{R}_k  Y_k\right)&\\
    &= \sum_{k=1}^K w_k^2  \tilde{X}_0\tilde{R}_k\left(  Z_k  G  Z_k^T + \sigma^2_{\epsilon} I\right)  \tilde{R}_k^T\tilde{X}^T_0\\
    &= \sum_{k=1}^K w_k^2 \tilde{X}_0 \tilde{R}_k Z_k  G  Z_k^T  \tilde{R}_k^T\tilde{X}^T_0 + \sigma^2_{\epsilon}\sum_{k=1}^Kw_k^2  \tilde{X}_0\tilde{R}_k  \tilde{R}_k^T\tilde{X}^T_0\\
\end{flalign*}
Let $b^{\text{Merge}} = Bias\left(\tilde{X}_0 \hat{ \beta}_{(M)}^{\text{Merge}}\right)$. The MSPE of $\hat{\beta}^{\text{Merge}}_{(M)}$ is
{\scriptsize
\begin{align*}
    E\left[\norm{Y_0 - \tilde{X}_0\hat{ \beta}^{\text{Merge}}_{(M)}}^2_2\right] &= \text{tr}\left(Cov\left(\tilde{X}_0\hat{ \beta}_{(M)}^{\text{Merge}}\right)\right) + \left( b^{\text{Merge}}\right)^T b^{\text{Merge}} + E\left[\norm{Y_0 - f(\tilde{X}_0)}^2_2\right]\\
    &= \text{tr}\left(\tilde{X}_0\tilde{R} \text{blkdiag}\left(\left\{Cov( Y_k)\right\}_{k=1}^K\right)  \tilde{R}^T\tilde{X}^T_0\right) + \left( b^{\text{Merge}}\right)^T b^{\text{Merge}} + E\left[\norm{Y_0 - f(\tilde{X}_0)}^2_2\right]\\
    &= \text{tr}\left( \text{blkdiag}\left(\left\{ Z_k  G  Z_k^T + \sigma^2_{\epsilon} I\right\}_{k=1}^K\right)  \tilde{R}^T\tilde{X}^T_0\tilde{X}_0  \tilde{R}\right) + \left( b^{\text{Merge}}\right)^T b^{\text{Merge}}+ E\left[\norm{Y_0 - f(\tilde{X}_0)}^2_2\right]\\
    &= \text{tr}\left(\text{blkdiag}\left(\{ Z_k  G  Z_k^T\}_{k=1}^K\right)\tilde{R}^T\tilde{X}^T_0\tilde{X}_0  \tilde{R}\right) + \sigma^2_{\epsilon}\text{tr}\left(\tilde{R}^T\tilde{X}^T_0\tilde{X}_0  \tilde{R}\right)+  \left( b^{\text{Merge}}\right)^T b^{\text{Merge}}+ E\left[\norm{Y_0 - f(\tilde{X}_0)}^2_2\right]\\
    &= \text{tr}\left( Z'  G'  Z'^T  \tilde{R}^T\tilde{X}^T_0\tilde{X}_0  \tilde{R}\right) + \sigma^2_{\epsilon}\text{tr}\left( \tilde{R}^T\tilde{X}^T_0\tilde{X}_0  \tilde{R}\right)+  \left( b^{\text{Merge}}\right)^T b^{\text{Merge}}+ E\left[\norm{Y_0 - f(\tilde{X}_0)}^2_2\right]\\
    &= \text{tr}\left( G' Z'^T  \tilde{R}^T\tilde{X}^T_0\tilde{X}_0  \tilde{R} Z'\right) + \sigma^2_{\epsilon}\text{tr}\left( \tilde{R}^T\tilde{X}^T_0\tilde{X}_0  \tilde{R}\right)+  \left( b^{\text{Merge}}\right)^T b^{\text{Merge}}+ E\left[\norm{Y_0 - f(\tilde{X}_0)}^2_2\right]\\
   % &= \sum_{j=1}^r \sigma^2_{(j)} \text{tr}\left( Z^T  R^T X_0^T X_0  R  Z\right) + \sigma^2_{\epsilon}\text{tr}\left( R^T X_0^T X_0  R\right)+  \left( b^{\text{Merge}}\right)^T b^{\text{Merge}}\\
   &= \sum_{d=1}^D \sigma_{(d)}^2  \left\{\sum_{i: \sigma^2_{i} = \sigma^2_{(d)}} \left[\sum_{k=1}^K \left( Z'^T  \tilde{R}^T\tilde{X}^T_0\tilde{X}_0  R Z'\right)_{i + Q \times (k - 1), i + Q \times (k - 1)}\right] \right\}
   + \sigma^2_{\epsilon}\text{tr}\left( \tilde{R}^T\tilde{X}^T_0\tilde{X}_0  \tilde{R}\right) \\
   &+  \left( b^{\text{Merge}}\right)^T b^{\text{Merge}}+ E\left[\norm{Y_0 - f(\tilde{X}_0)}^2_2\right]
    %&= \sigma^2 \text{tr}\left( Z'^T  R^T\tilde{X}^T_0\tilde{X}_0  R Z'\right) + \sigma^2_{\epsilon}\text{tr}\left( R^T\tilde{X}^T_0\tilde{X}_0  R\right)+  \left( b^{\text{Merge}}\right)^T b^{\text{Merge}}+ E\left[\norm{Y_0 - f(\tilde{X}_0)}^2_2\right]
\end{align*}
}

Let $ b^{\text{Ens}} = Bias\left(\tilde{X}_0 \hat{\beta}_{(M_{\text{Ens}})}^{\text{Ens}}\right)$. The MSPE of $\hat{ \beta}^{Ens}_{(M_{\text{Ens}})}$ is
{\scriptsize
\begin{align*}
    E\left[\norm{ Y_0 - \tilde{X}_0\hat{ \beta}^{\text{Ens}}_{(M_{\text{Ens}})}}^2_2\right] &= \text{tr}\left(Cov\left(\tilde{X}_0\hat{ \beta}_{(M_{\text{Ens}})}^{\text{Ens}}\right)\right) + \left( b^{\text{Ens}}\right)^T b^{\text{Ens}}+ E\left[\norm{Y_0 - f(\tilde{X}_0)}^2_2\right]\\
    &= \text{tr}\left(\tilde{X}_0Cov\left(\sum_{k=1}^K w_k  \tilde{R}_k  Y_k\right)\tilde{X}_0^T\right) + \left( b^{\text{Ens}}\right)^T b^{\text{Ens}}+ E\left[\norm{Y_0 - f(\tilde{X}_0)}^2_2\right]\\
    &= \sum_{k=1}^K w_k^2 \text{tr}\left( Z_k  G  Z_k^T  \tilde{R}_k^T\tilde{X}_0^T\tilde{X}_0 \tilde{R}_k\right) +\sigma^2_{\epsilon} \sum_{k=1}^Kw_k^2 \text{\text{tr}}\left( \tilde{R}_k^T \tilde{X}_0^T\tilde{X}_0 \tilde{R}_k\right) + \left( b^{\text{Ens}}\right)^T b^{\text{Ens}}+ E\left[\norm{Y_0 - f(\tilde{X}_0)}^2_2\right]\\
    &= \sum_{k=1}^K w_k^2 \text{tr}\left( G  Z_k^T  \tilde{R}_k^T \tilde{X}_0^T\tilde{X}_0\tilde{R}_k Z_k\right) +\sigma^2_{\epsilon} \sum_{k=1}^Kw_k^2 \text{\text{tr}}\left( \tilde{R}_k^T \tilde{X}_0^T\tilde{X}_0  \tilde{R}_k\right) + \left( b^{\text{Ens}}\right)^T b^{\text{Ens}}+ E\left[\norm{Y_0 - f(\tilde{X}_0)}^2_2\right]\\
     &=\sum_{d=1}^D \sigma^2_{(d)} \left\{ \sum_{i:\sigma^2_i = \sigma^2_{(d)}}\left[\sum_{k=1}^K w_k^2 \left(  Z_k^T  \tilde{R}_k^T \tilde{X}_0^T\tilde{X}_0 \tilde{R}_k Z_k\right)_{i,i}\right]\right\}\\
     &+\sigma^2_{\epsilon} \sum_{k=1}^Kw_k^2 \text{\text{tr}}\left( \tilde{R}_k^T \tilde{X}_0^T\tilde{X}_0\tilde{R}_k\right) + \left( b^{\text{Ens}}\right)^T b^{\text{Ens}}+ E\left[\norm{Y_0 - f(\tilde{X}_0)}^2_2\right]
      % &= \sigma^2 \sum_{k=1}^K w_k^2 \text{tr}\left(  Z_k^T  \tilde{R}_k^T \tilde{X}_0^T\tilde{X}_0 \tilde{R}_k Z_k\right) +\sigma^2_{\epsilon} \sum_{k=1}^Kw_k^2 \text{\text{tr}}\left( \tilde{R}_k^T \tilde{X}_0^T\tilde{X}_0\tilde{R}_k\right) + \left( b^{\text{Ens}}\right)^T b^{\text{Ens}}+ E\left[\norm{Y_0 - f(\tilde{X}_0)}^2_2\right]
\end{align*}
}
If $\sigma^2_1 =\sigma^2_2 = \ldots = \sigma^2_J$ (Theorem 1), then

{\scriptsize
\begin{align*}
\overline{\sigma}^2 & \geq \frac{Q}{P} \times  \frac{\sigma^2_{\epsilon}\left(\sum_{k=1}^Kw_k^2 \text{\text{tr}}\left( \tilde{R}_k^T  \tilde{X}_0^T \tilde{X}_0  \tilde{R}_k\right)-\text{tr}\left( \tilde{R}^T \tilde{X}_0^T \tilde{X}_0  \tilde{R}\right)\right)+ \left( b^{\text{Ens}}\right)^T b^{\text{Ens}} -   \left( b^{\text{Merge}}\right)^T b^{\text{Merge}}}{\text{tr}\left( Z'^T  \tilde{R}^T \tilde{X}_0^T \tilde{X}_0  \tilde{R}  Z'\right) - \sum_{k=1}^K w_k^2 \text{tr}\left( Z_k^T  \tilde{R}_k^T \tilde{X}_0^T \tilde{X}_0  \tilde{R}_k  Z_k\right)}\\
&\Rightarrow \sigma^2 \left(\text{tr}\left( Z'^T  \tilde{R}^T \tilde{X}_0^T \tilde{X}_0  \tilde{R} Z'\right) - \sum_{k=1}^K w_k^2 \text{tr}\left( Z_k  \tilde{R}_k^T \tilde{X}_0^T \tilde{X}_0  \tilde{R}_k  Z_k\right)\right) \\
& \geq \sigma^2_{\epsilon}\left(\sum_{k=1}^Kw_k^2 \text{\text{tr}}\left( \tilde{R}_k^T  \tilde{X}_0^T \tilde{X}_0  \tilde{R}_k\right)-\text{tr}\left(\tilde{R}^T \tilde{X}_0^T \tilde{X}_0  \tilde{R}\right)\right) + \left( b^{\text{Ens}}\right)^T b^{\text{Ens}} -   \left( b^{\text{Merge}}\right)^T b^{\text{Merge}} \\
    &\Leftrightarrow  E\left[\norm{ Y_0 -  \tilde{X}_0 \hat{ \beta}^{\text{Merge}}_{(M)}}^2_2\right] \geq  E\left[\norm{ Y_0 -  \tilde{X}_0 \hat{ \beta}^{\text{Ens}}_{(M_{\text{Ens}})}}^2_2\right].
\end{align*}
}

If $\sigma^2_j \neq\sigma^2_{j'}$ for at least one $j \neq j'$ (Theorem 2), then let 

$$a_d = \sum_{i: \sigma^2_i = \sigma^2_{(d)}} \left[ \sum_{k=1}^K  \left(Z'^T\tilde{R}^T\tilde{X}_0^T\tilde{X}_0 \tilde{R}Z'\right)_{i + Q \times (k - 1), i + Q \times (k - 1)} -  w_k^2\left(Z_k^T\tilde{R}_k^T\tilde{X}^T_0\tilde{X}_0\tilde{R}_k Z_k\right)_{i,i}\right]$$
and $$c = \sigma^2_{\epsilon}\left(\sum_{k=1}^Kw_k^2 \text{\text{tr}}\left( \tilde{R}_k^T  \tilde{X}_0^T \tilde{X}_0  \tilde{R}_k\right)-\text{tr}\left( \tilde{R}^T \tilde{X}_0^T \tilde{X}_0  \tilde{R}\right)\right)+\left( b^{\text{Ens}}\right)^T b^{\text{Ens}} - \left( b^{\text{Merge}}\right)^T b^{\text{Merge}}.$$
Since 
$$E\left[\norm{ Y_0 -  \tilde{X}_0 \hat{ \beta}^{\text{Merge}}_{(M)}}^2_2\right] \geq  E\left[\norm{ Y_0 -  \tilde{X}_0 \hat{ \beta}^{\text{Ens}}_{(M_{\text{Ens}})}}^2_2\right] \Longleftrightarrow \sum_{d=1}^D \sigma^2_{(d)} a_d \geq c$$
and $$\left(\min_d \frac{a_d}{J_d}\right) \sum_{d=1}^D \sigma^2_{(d)} J_d \leq \sum_{d=1}^D \sigma^2_{(d)} \leq \left(\max_d \frac{a_d}{J_d}\right) \sum_{d=1}^D \sigma^2_{(d)} J_d,$$
assuming $a_d > 0$ for all $d$, then 
\begin{align*}
    \overline{\sigma}^2 &= \frac{\sum_{d=1}^D \sigma^2_{(d)} J_d}{P} \leq \frac{c}{P \max_{d} \frac{a_d}{J_d}} = \tau_1 \\
    &\Rightarrow \sum_{d=1}^D \sigma^2_{(d)} a_d \leq \max_{d} \frac{a_d}{J_d} \sum_{d=1}^D \sigma^2_{(d)} J_d \leq c\\
    &\Longleftrightarrow E\left[\norm{ Y_0 -  \tilde{X}_0 \hat{ \beta}^{\text{Merge}}_{(M)}}^2_2\right] \leq  E\left[\norm{ Y_0 -  \tilde{X}_0 \hat{ \beta}^{\text{Ens}}_{(M_{\text{Ens}})}}^2_2\right].
\end{align*}
and 
\begin{align*}
    \overline{\sigma}^2 &= \frac{\sum_{d=1}^D \sigma^2_{(d)} J_d}{P} \geq \frac{c}{P \max_{d} \frac{a_d}{J_d}} = \tau_2 \\
    &\Rightarrow \sum_{d=1}^D \sigma^2_{(d)} a_d \geq \min_{d} \frac{a_d}{J_d} \sum_{d=1}^D \sigma^2_{(d)} J_d \geq c\\
    &\Longleftrightarrow E\left[\norm{ Y_0 -  \tilde{X}_0 \hat{ \beta}^{\text{Merge}}_{(M)}}^2_2\right] \geq  E\left[\norm{ Y_0 -  \tilde{X}_0 \hat{ \beta}^{\text{Ens}}_{(M_{\text{Ens}})}}^2_2\right].
\end{align*}

\end{proof}

\clearpage

\begin{proof}[Proof of Proposition \ref{prop:2}]
\begin{align*}
    Var\left(\left.\hat{\beta}^{\text{Merge, CW}}_{(M)j}\right|\mathcal{P}\right) &= \vartheta^2_j \left(1 - \frac{\xi_j\phi(\xi_j) - \alpha_j\phi(\alpha_j)}{\Phi(\xi_j) - \Phi(\alpha_j)} - \left(\frac{\phi(\xi_j) - \phi(\alpha_j)}{\Phi(\xi_j) - \Phi(\alpha_j)}\right)^2\right)\\
    Bias^2\left(\left.\hat{\beta}^{\text{Merge, CW}}_{(M)j}\right|\mathcal{P}\right) &= \left( \bar{\mu}_j - \vartheta_j\left(\frac{\phi(\xi_j)-\phi(\alpha_j)}{\Phi(\xi_j) - \Phi(\alpha_j)}\right) - \beta_j\right)^2\\
    Var\left(\left.\hat{\beta}^{\text{Ens, CW}}_{(M)j}\right|\mathcal{P}^{\text{Ens}}\right) &= Var\left(\left.\sum_{k=1}^K w_k \hat{\beta}^{\text{CW}}_{(M_k)jk}\right|\mathcal{P}^{\text{Ens}}\right)\\
    &= \sum_{k=1}^K w_k^2Var\left(\left. \hat{\beta}^{\text{CW}}_{(M_k)jk}\right|\mathcal{P}_k\right)  %\qquad \qqaud (\text{because } Y_k\text{'s are independent})
    \\
    &= \sum_{k=1}^K w_k^2 \vartheta^2_{jk} \left(1 - \frac{\xi_{jk}\phi(\xi_{jk}) - \alpha_{jk}\phi(\alpha_{jk})}{\Phi(\xi_{jk}) - \Phi(\alpha_{jk})} - \left(\frac{\phi(\xi_{jk}) - \phi(\alpha_{jk})}{\Phi(\xi_{jk}) - \Phi(\alpha_{jk})}\right)^2\right)\\
    Bias^2\left(\left.\hat{\beta}^{\text{Ens, CW}}_{(M)j}\right|\mathcal{P}^{\text{Ens}}\right) &= \left(\sum_{k=1}^K w_k E\left(\left.\hat{\beta}^{\text{CW}}_{(M_k)jk}\right|\mathcal{P}^\text{Ens}\right) - \beta_j\right)^2\\
    &= \left(\sum_{k=1}^K w_k E\left(\left.\hat{\beta}^{\text{CW}}_{(M_k)jk}\right|\mathcal{P}_k\right) - \beta_j\right)^2 %\qquad \qqaud (\text{because } Y_k\text{'s are independent})
    \\
    &= \left(\sum_{k=1}^K w_k  \left(\bar{\mu}_{jk} - \vartheta_{jk}\left(\frac{\phi(\xi_{jk})-\phi(\alpha_{jk})}{\Phi(\xi_{jk}) - \Phi(\alpha_{jk})}\right)\right) - \beta_j\right)^2
\end{align*}
\end{proof}
\newpage
\begin{claim}[Truncation region for component-wise boosting coefficients]
\label{claim:1}
Let $Y \in \mathbb{R}^N$ denote the outcome vector where $Y \sim N(\mu, \Sigma)$. The boosting coefficients can be written as
\begin{align*}
    \hat{\beta}^{\text{CW, Merge}}_{(M)} &= V^TY\\
    &\coloneqq \sum_{m=1}^M \eta B_{(m)}\left(\prod_{\ell = 0}^{m-1} (I - \eta H_{(m - \ell - 1))}\right)Y,
\end{align*}
where $V \in \mathbb{R}^{N \times P}$ depends on $Y$ through variable selection. We decompose $ Y$ into
$$ Y =  C( V^T Y) +  Z^*,$$
where $$ C =  \Sigma  V\left( V^T \Sigma  V\right)^{-1},$$
is a $N-$dimensional vector and 
$$ Z^* = \left( I -  \Sigma  V\left( V^T \Sigma  V\right)^{-1}  V^T\right) Y$$
is a $\ell_P \coloneqq 2M(P-1)$ dimensional vector. We claim the polyhedral set $\{ \Gamma  Y \geq  0\}$ can be re-written as a truncation region where the coefficients $\hat{\beta}^{\text{CW, Merge}}_{(M)}$ have non-rectangular truncation limits.
\end{claim}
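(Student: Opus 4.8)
The plan is to mirror the proof of the polyhedral Lemma \ref{lemma:lee}, replacing the single contrast vector $v_j$ by the full matrix $V$ so that all $P$ coordinates of $\hat{\beta}^{\text{CW, Merge}}_{(M)} = V^TY$ are treated simultaneously. First I would verify that the stated decomposition is a genuine identity with the right orthogonality structure: since $C = \Sigma V(V^T\Sigma V)^{-1}$ we have $V^TC = I$, so $C(V^TY) + Z^* = (\Sigma V(V^T\Sigma V)^{-1}V^T + I - \Sigma V(V^T\Sigma V)^{-1}V^T)Y = Y$, and moreover $V^TZ^* = V^TY - (V^TC)(V^TY) = 0$. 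Under $Y \sim N(\mu,\Sigma)$ a direct covariance computation gives $\mathrm{Cov}(Z^*, V^TY) = (I - \Sigma V(V^T\Sigma V)^{-1}V^T)\Sigma V = 0$, so $Z^*$ is independent of $\hat{\beta}^{\text{CW, Merge}}_{(M)}$; this is the multivariate analog of the independence that, in Lemma \ref{lemma:lee}, makes the limits $a_j, b_j$ functions of $z_j$ alone.

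Next I would substitute $Y = C(V^TY) + Z^* = C\hat{\beta}^{\text{CW, Merge}}_{(M)} + Z^*$ into the polyhedral event. Because $V$ (and hence $\Gamma$, $C$, $Z^*$) is fixed once we condition on the selection path, the event becomes
\[
\{\Gamma Y \geq 0\} = \{\Gamma C\,\hat{\beta}^{\text{CW, Merge}}_{(M)} \geq -\Gamma Z^*\},
\]
a polyhedron in $\mathbb{R}^P$ with $\ell_P = 2M(P-1)$ facets, where the $\ell$th constraint reads $(\Gamma C)_\ell\,\hat{\beta}^{\text{CW, Merge}}_{(M)} \geq -(\Gamma Z^*)_\ell$ with $(\Gamma C)_\ell \in \mathbb{R}^{1\times P}$.

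To expose the truncation limits on a coordinate $\hat{\beta}^{\text{CW, Merge}}_{(M)j}$, I would isolate it in each constraint, moving the other coordinates to the right-hand side to obtain $(\Gamma C)_{\ell j}\,\hat{\beta}^{\text{CW, Merge}}_{(M)j} \geq -(\Gamma Z^*)_\ell - \sum_{j'\neq j}(\Gamma C)_{\ell j'}\,\hat{\beta}^{\text{CW, Merge}}_{(M)j'}$. Splitting the facets into three groups according to the sign of $(\Gamma C)_{\ell j}$ (positive, negative, or zero) — exactly as in the proof of Lemma \ref{lemma:lee} — turns each nonzero constraint into a lower or upper bound on $\hat{\beta}^{\text{CW, Merge}}_{(M)j}$, and taking the maximum of the lower bounds and the minimum of the upper bounds yields limits $\mathcal{L}_j$ and $\mathcal{U}_j$. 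Collecting these over $j$ shows the selection event is equivalent to $\mathcal{L}_j \leq \hat{\beta}^{\text{CW, Merge}}_{(M)j} \leq \mathcal{U}_j$ for all $j$, giving the truncation-region representation.

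The main obstacle — and the content of the word \emph{non-rectangular} — is that, unlike the univariate Lemma \ref{lemma:lee}, the bounds $\mathcal{L}_j$ and $\mathcal{U}_j$ depend not only on $Z^*$ but also on the other coordinates $\{\hat{\beta}^{\text{CW, Merge}}_{(M)j'}\}_{j'\neq j}$ through the off-diagonal entries $(\Gamma C)_{\ell j'}$. Thus the $P$ limits cannot be separated into a product of fixed intervals, and the region carved out in $\mathbb{R}^P$ is a general polyhedron rather than a box. I would close the argument by exhibiting this coupling explicitly (noting that $(\Gamma C)_{\ell j'} \neq 0$ for some $\ell$ and some $j' \neq j$ in general), which simultaneously proves the claim and explains why the joint conditional law of $\hat{\beta}^{\text{CW, Merge}}_{(M)}$ is a multivariate truncated normal with an intractable normalizing constant rather than a product of independent univariate truncated normals.
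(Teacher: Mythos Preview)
Your argument is correct and establishes the claim, but the paper takes a genuinely different route. You isolate each coordinate $\hat{\beta}^{\text{CW, Merge}}_{(M)j}$ symmetrically, obtaining bounds $\mathcal{L}_j, \mathcal{U}_j$ that depend on all the remaining coordinates $\{\hat{\beta}^{\text{CW, Merge}}_{(M)j'}\}_{j'\neq j}$; this is enough to show the region is a non-rectangular polyhedron in $V^TY$ and that the conditional law is a multivariate truncated normal with coupled limits. The paper instead invokes Fourier--Motzkin elimination: starting from the same system $\Gamma C(V^TY) \geq -\Gamma Z^*$, it eliminates $(V^TY)_P$, then $(V^TY)_{P-1}$, and so on, and proves by induction that this yields a \emph{triangular} description in which $(V^TY)_1$ lies in an interval depending only on $Z^*$, $(V^TY)_2$ in an interval depending on $(V^TY)_1$ and $Z^*$, and in general $(V^TY)_k$ in an interval depending on $(V^TY)_{1:k-1}$ and $Z^*$. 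Your approach is more elementary and suffices for the claim as stated; the paper's sequential form is more explicit about the geometry of the region and is the natural parametrization if one wanted, say, to write the normalizing constant as an iterated integral. One small point: your equivalence ``the selection event is $\mathcal{L}_j \leq \hat{\beta}^{\text{CW, Merge}}_{(M)j} \leq \mathcal{U}_j$ for all $j$'' quietly drops the constraints coming from rows $\ell$ with $(\Gamma C)_{\ell j} = 0$; these are recovered when you pass to another coordinate $j'$ with $(\Gamma C)_{\ell j'}\neq 0$, so the conjunction over all $j$ is still equivalent to the original polyhedron (provided $\Gamma C$ has no zero rows), but it is worth saying so explicitly.
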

\begin{proof}
We define the projection $\Pi_k(S)$ of a set $S \subset \mathbb{R}^n$ by letting
$$\Pi_k(S) = \left\{(x_1, \ldots, x_k)| \exists x_{k+1}, \ldots, x_n \text{ s.t. } (x_1, \ldots, x_n) \in S\right\}.$$

Given a polyhedron $\mathcal{P}$ in terms of linear inequality constraints of the form
$$ A  x \geq  b,$$
we state the Fourier Motzkin elimination algorithm from \cite{bertsimas1997introduction}.
\begin{algorithm}
\caption{Elimination algorithm for a system of linear inequalities}
\begin{algorithmic}[1]

\State Rewrite each constraint $\sum_{j=1}^N a_{ij} x_j \geq b_i$ in the form $$a_{iNx_N} \geq -\sum_{j=1}^{N-1} a_{ij}x_j + b_i, \quad i = 1, \ldots, m$$
if $a_{iN} \neq 0$, divide both sides by $a_{iN}.$ By letting $\bar{x} = (x_1, \ldots, x_{n-1}),$ we obtain an equivalent representation of $\mathcal{P}$ involving the following constraints
\begin{align*}
    x_N \geq d_i + f'_i \bar{x}, \qquad &\text{if } a_{iN} > 0\\
    d_j + f'_j\bar{x} \geq x_N, \qquad &\text{if } a_{jN} < 0\\
    0 \geq d_k + f'_k \bar{x}, \qquad &\text{if } a_{kN} = 0
\end{align*}
Each $d_i, d_j, d_k$ is a scalar, and each $f_i, f_j, f_k$ is a vector in $\mathbb{R}^{N-1}$.
\State Let $\mathcal{Q}$ be the polyhedron in $\mathbb{R}^{N-1}$ defined by the constraints
\begin{align*}
    d_j + f'_j\bar{x} \geq d_i + f'_i \bar{x} \qquad &\text{if } a_{iN} > 0 \text{ and } a_{jN} < 0\\
    0 \geq d_k +f'_k\bar{x}, \qquad &\text{if } a_{kN} = 0 
\end{align*}
\end{algorithmic}
\end{algorithm}

We note the following:
\begin{enumerate}
    \item The projection $\Pi_k(\mathcal{P})$ can be generated by repeated application of the elimination algorithm (Theorem 2.10 in \cite{bertsimas1997introduction})
    \item The elimination approach always produces a polyhedron (definition of the elimination algorithm in \cite{bertsimas1997introduction}).
\end{enumerate}
Therefore, it follows that a projection $\Pi_k(\mathcal{P})$ of a polyhedron is also a polyhedron. 

The polyhedral set $\mathcal{P} \coloneqq \{ Y:  \Gamma  Y \geq  0\}$ is a system of $\ell_P \coloneqq 2M(P-1)$ linear inequalities, with $P$ variables $ V^T Y_1, \ldots,  V^T Y_P.$ Let $( A)_{ij}$ denote the $i,j$-th entry in matrix $ A$. We let $I_P = \{1, 2, \ldots, \ell_P\}$ denote the row index set for the system of inequalities with $P$ variables and partition it into subsets $I_P^+, I_P^-,$ and $I_P^0$, where $I_P^+ = \{i: ( \Gamma  C)_{ip} > 0\}, I_P^- = \{i: ( \Gamma  C)_{ip} < 0\},$ and $I_P^0 = \{i: ( \Gamma  C)_{ip} = 0\}$. Then we have
\begin{small}

    \begin{align*}
     \{ \Gamma  Y \geq  0\} &= \left\{ \Gamma \left( C  V^T Y +  Z^*\right) \geq  0\right\}\\
     &= \left\{ \underbrace{ \Gamma  C}_{\ell_P \times P} \underbrace{ V^T  Y}_{P \times 1} \geq \underbrace{ 0 -  \Gamma  Z^*}_{\ell_P \times 1}\right\}\\
      &= \left\{\sum_{j=1}^P ( \Gamma  C)_{ij}( V^T Y)_j \geq 0 - ( \Gamma  Z^*)_i \quad  i = 1, \ldots, \ell_P\right\}\\
        &= \left\{( \Gamma  C)_{ip}( V^T  Y)_{p} \geq -\sum_{j=1}^{P-1} ( \Gamma  C)_{ij}( V^T Y)_j - ( \Gamma  Z^*)_i \quad i = 1,\ldots, \ell_P\right\}\\
        &= \begin{Bmatrix}
( V^T Y)_P \geq \frac{-\sum_{j=1}^{P-1} ( \Gamma  C)_{qj}( V^T Y)_j - ( \Gamma  Z^*)_q}{( \Gamma  C)_{qp}}, & \text{for } q \in I_P^+\\ 
( V^T Y)_P \leq \frac{-\sum_{j=1}^{P-1} ( \Gamma  C)_{rj}( V^T Y)_j - ( \Gamma  Z^*)_r}{( \Gamma  C)_{rp}}, & \text{for } r \in I_P^-\\ 
0 \geq -\sum_{j=1}^{P-1}( \Gamma  C)_{sj}( V^T Y)_j - ( \Gamma  Z^*)_s & \text{for } s \in I_P^0\\ 
\end{Bmatrix}\\
&= \begin{Bmatrix}\max_{q \in I^+} \frac{-\sum_{j=1}^{P-1} ( \Gamma  C)_{qj}( V^T Y)_j - ( \Gamma  Z^*)_q}{( \Gamma  C)_{qp}} \leq ( V^T Y)_{p} \leq \min_{r \in I^-} \frac{-\sum_{j=1}^{P-1} ( \Gamma  C)_{rj}( V^T Y)_j - ( \Gamma  Z^*)_r}{( \Gamma  C)_{rp}}\\
0 \geq -\sum_{j=1}^{P-1}( \Gamma  C)_{sj}( V^T Y)_j - ( \Gamma  Z^*)_s & \text{for } s \in I_P^0
\end{Bmatrix}
\end{align*}
\end{small}

We reduce this to a system of inequalities with $P-1$ variables after eliminating $( V^T Y)_P$:
\begin{equation}\label{eqn:1}
    \begin{Bmatrix}
\frac{-\sum_{j=1}^{P-1} ( \Gamma  C)_{qj}( V^T Y)_j - ( \Gamma  Z^*)_q}{( \Gamma  C)_{qp}} \leq \frac{-\sum_{j=1}^{P-1} ( \Gamma  C)_{rj}( V^T Y)_j - ( \Gamma  Z^*)_r}{( \Gamma  C)_{rp}} \text{ for } q \in I_P^+, r \in I_P^-\\
 0 \geq -\sum_{j=1}^{P-1}( \Gamma  C)_{sj}( V^T Y)_j - ( \Gamma  Z^*)_s \text{ for } s \in I_P^0
 \end{Bmatrix}
\end{equation}
The set in (\ref{eqn:1}) is a system of $\ell_{P-1} \coloneqq |I_P^+| \times |I_P^-| + |I_P^0|$ inequalities. It is a polyhedral set in $\mathbb{R}^{P-1}$, which can be seen by rewriting (\ref{eqn:1}) as follows:
\newpage
\begin{scriptsize}
\begin{align*}
   & \begin{Bmatrix}
\frac{-\sum_{j=1}^{P-1} ( \Gamma  C)_{qj}( V^T Y)_j - ( \Gamma  Z^*)_q}{( \Gamma  C)_{qp}} \leq \frac{-\sum_{j=1}^{P-1} ( \Gamma  C)_{rj}( V^T Y)_j - ( \Gamma  Z^*)_r}{( \Gamma  C)_{rp}} \text{ for } q \in I^+, r \in I^-\\
 0 \geq -\sum_{j=1}^{P-1}( \Gamma  C)_{sj}( V^T Y)_j - ( \Gamma  Z^*)_s \text{ for } s \in I^0
 \end{Bmatrix} \\
 =&\begin{Bmatrix}
-\sum_{j=1}^{P-1} ( \Gamma   C)_{rp}( \Gamma  C)_{qj}( V^T Y)_j -( \Gamma   C)_{rp} ( \Gamma  Z^*)_q \geq -\sum_{j=1}^{P-1} ( \Gamma   C)_{qp}( \Gamma  C)_{rj}( V^T Y)_j - ( \Gamma   C)_{qp}( \Gamma  Z^*)_r \text{ for } q \in I^+, r \in I^-\\
 \sum_{j=1}^{P-1}( \Gamma  C)_{sj}( V^T Y)_j \geq - ( \Gamma  Z^*)_s \text{ for } s \in I^0
 \end{Bmatrix} \\
  =&\begin{Bmatrix}
\sum_{j=1}^{P-1} \left(( \Gamma   C)_{qp}( \Gamma  C)_{rj}- ( \Gamma   C)_{rp}( \Gamma  C)_{qj}\right)( V^T Y)_j  \geq ( \Gamma   C)_{rp} ( \Gamma  Z^*)_q - ( \Gamma   C)_{qp}( \Gamma  Z^*)_r \text{ for } q \in I^+, r \in I^-\\
 \sum_{j=1}^{P-1}( \Gamma  C)_{sj}( V^T Y)_j \geq - ( \Gamma  Z^*)_s \text{ for } s \in I^0
 \end{Bmatrix}.
\end{align*}
\end{scriptsize}

Let $ A_{p-k}$ denote a $\ell_{p-k} \times (p-k)$ matrix, $( V^T Y)_{1:p-k}$ a vector that contains the first $p-k$ coordinates of $( V^T Y)$, and $ b_{p-k}( Z^*)$ a $\ell_{p-k}$-dimensional vector, where $k \in \{0, \ldots, P-1\}$, and $\ell_{p-k}$ is the number of linear constraints in $\Pi_{p-k}(\mathcal{P})$, which is the projection of $\mathcal{P}$. Note that $ A_P= \Gamma  C$ and $ b_P( Z^*) =  0 -  \Gamma  Z^*.$

We repeat the elimination process $P-1$ times to obtain $\Pi_1(\mathcal{P}):$
\begin{align*}
\left\{ \Gamma  Y \geq  0\right\} &= \{ A_{p}( V^T Y) \geq  b_{p}( Z^*)\}\\
\Pi_{P-1}(\mathcal{P}) &= \{ A_{P-1}( V^T Y)_{1:P-1} \geq  b_{P-1}( Z^*)\}\\
        &\vdots\\
            \Pi_{1}(\mathcal{P}) &= \{ A_{1}( V^T Y)_{1} \geq  b_{1}( Z^*)\}.
\end{align*}
\underline{Induction base case for $\Pi_2(\mathcal{P})$:} Without loss of generality, we assume the variable in $\Pi_1(\mathcal{P})$ is $( V^T Y)_1$. We can obtain its lower and upper truncation limits, $\mathcal{V}_1^{\text{lo}}( Z^*)$ and $\mathcal{V}_1^{\text{up}}( Z^*)$, and $\mathcal{V}_1^{0}( Z^*)$ using the same argument as the one in \cite{lee2016exact}, where
\begin{align*}
    \mathcal{V}_1^{\text{lo}}( Z^*) &= \max_{i:( A_1)_i >0} \frac{( b_1( Z^*))_i}{( A_1)_i}\\
    \mathcal{V}_1^{\text{up}}( Z^*) &= \min_{i:( A_1)_i < 0} \frac{( b_1( Z^*))_i}{( A_1)_i}\\
    \mathcal{V}_1^{0}( Z^*) &= \max_{i:( A_1)_i = 0} ( b_1( Z^*))_i.
\end{align*}
We conclude that $\Pi_1(\mathcal{P}) = \{(\mathcal{V}_1^{\text{lo}}( Z^*) \leq ( V^T Y)_1 \leq \mathcal{V}_1^{\text{up}}( Z^*), \mathcal{V}_1^{0}( Z^*) \leq 0\}.$

By the definition of $\Pi_2(\mathcal{P})$, we have 
\begin{align*}
    \Pi_2(\mathcal{P}) &=\left\{ A_2( V^T Y)_{1:2} \geq  b_2( Z^*)\right\}\\
    &= \begin{Bmatrix}
     A_2 ( V^T Y)_{1:2} \geq  b_2( Z^*)\\
    \mathcal{V}_1^{\text{lo}}( Z^*) \leq ( V^T Y)_1 \leq \mathcal{V}_1^{\text{up}}( Z^*)\\
    \mathcal{V}_1^{0}( Z^*) \leq 0
    \end{Bmatrix}
\end{align*}
because reducing the system from $\Pi_2(\mathcal{P})$ to $\Pi_1(\mathcal{P})$ does not change the range of $( V^T Y)_1$ that satisfy the linear constraints in $\Pi_2(\mathcal{P}).$

We can obtain the lower and upper truncation limits for $( V^T Y)_2$ as a function of $( V^T Y)_1$. 
\begin{align*}
    \Pi_2(\mathcal{P}) &= \begin{Bmatrix}
     A_2 ( V^T Y)_{1:2} \geq  b_2( Z^*)\\
    \end{Bmatrix}\\
    &= \begin{Bmatrix}
     A_2 ( V^T Y)_{1:2} \geq  b_2( Z^*)\\
    \mathcal{V}_1^{\text{lo}}( Z^*) \leq ( V^T Y)_1 \leq \mathcal{V}_1^{\text{up}}( Z^*)\\
    \mathcal{V}_1^{0}( Z^*) \leq 0
    \end{Bmatrix}\\
    &= \begin{Bmatrix}
    \sum_{j=1}^2 ( A_2)_{ij}( V^T  Y)_j \geq ( b_2( Z^*))_i \quad \text{ for } i = 1, \ldots, \ell_{2}\\
     \mathcal{V}_1^{\text{lo}}( Z^*) \leq ( V^T Y)_1 \leq \mathcal{V}_1^{\text{up}}( Z^*)\\
    \mathcal{V}_1^{0}( Z^*) \leq 0
    \end{Bmatrix}\\
    &=\begin{Bmatrix}
     ( A_2)_{i2} ( V^T Y)_2 \geq -( A_2)_{i1}( V^T Y)_1 + ( b_2( Z^*))_i \quad \text{ for } i = 1, \ldots, \ell_{2}\\
     \mathcal{V}_1^{\text{lo}}( Z^*) \leq ( V^T Y)_1 \leq \mathcal{V}_1^{\text{up}}( Z^*)\\
    \mathcal{V}_1^{0}( Z^*) \leq 0
    \end{Bmatrix}\\
    &= \begin{Bmatrix}
    ( V^T Y)_2 \geq \frac{-( A_2)_{i1}( V^T Y)_1 + ( b_2( Z^*))_i}{( A_2)_{i2}} \quad \text{ for } i: ( A_2)_{i2} > 0\\
    ( V^T Y)_2 \leq \frac{-( A_2)_{i1}( V^T Y)_1 + ( b_2( Z^*))_i}{( A_2)_{i2}} \quad \text{ for } i: ( A_2)_{i2} < 0\\
    0 \geq -( A_2)_{i1}( V^T Y)_1 + ( b_2( Z^*))_i \quad \text{ for } i: ( A_2)_{i2} = 0\\
     \mathcal{V}_1^{\text{lo}}( Z^*) \leq ( V^T Y)_1 \leq \mathcal{V}_1^{\text{up}}( Z^*)\\
    \mathcal{V}_1^{0}( Z^*) \leq 0
    \end{Bmatrix}\\
     &= \begin{Bmatrix}
    \max\limits_{i:( A_2)_{i2} > 0} \frac{-( A_2)_{i1}( V^T Y)_1 + ( b_2( Z^*))_i}{( A_2)_{i2}}\leq ( V^T Y)_2 \leq \min\limits_{i:( A_1)_{i2} < 0}\frac{-( A_2)_{i1}( V^T Y)_1 + ( b_2( Z^*))_i}{( A_2)_{i2}}\\
    0 \geq \max\limits_{i:( A_2)_{i2} = 0} -( A_2)_{i1}( V^T Y)_1 + ( b_2( Z^*))_i\\
     \mathcal{V}_1^{\text{lo}}( Z^*) \leq ( V^T Y)_1 \leq \mathcal{V}_1^{\text{up}}( Z^*)\\
    \mathcal{V}_1^{0}( Z^*) \leq 0
    \end{Bmatrix}\\
    &=\begin{Bmatrix}
\mathcal{V}^{\text{lo}}_2( Z^*, ( V^T Y)_1) \leq ( V^T Y)_2 \leq \mathcal{V}^{\text{up}}_2( Z^*, ( V^T Y)_1)\\
\mathcal{V}^0_2( Z^*, ( V^T Y)_1) \leq 0\\
\mathcal{V}_{1}^{\text{lo}}( Z^*) \leq ( V^T  Y)_{1} \leq \mathcal{V}_{1}^{\text{up}}( Z^*)\\
\mathcal{V}_{1}^0( Z^*) \leq 0
\end{Bmatrix}
\end{align*}
where
\begin{align*}
    \mathcal{V}_2^{\text{lo}}( Z^*, ( V^T Y)_1) &= \max_{i: ( A_2)_{i2} > 0} \frac{-( A_2)_{i1}( V^T Y)_1( Z^*)_1 + ( b_2( Z^*))_i}{( A_2)_{i2}}\\
    \mathcal{V}_2^{\text{up}}( Z^*, ( V^T  Y)_1) &= \min_{i: ( A_2)_{i2} < 0} \frac{-( A_2)_{i1}( V^T Y)_1( Z^*)_1 + ( b_2( Z^*))_i}{( A_2)_{i2}}\\
    \mathcal{V}_2^0( Z^*, ( V^T Y)_1) &= \max_{i: ( A_2)_{i2} = 0} -( A_2)_{i1}( V^T Y)_1+( b_2( Z^*))_i.
\end{align*}

\underline{Inductive step for $\Pi_{P-1}(\mathcal{P})$}: Under the induction hypothesis, we assume 
$$\Pi_{P-2}(\mathcal{P}) = \begin{Bmatrix}
       \mathcal{V}_{1}^{\text{lo}}( Z^*) \leq ( V^T  Y)_{1} \leq \mathcal{V}_{1}^{\text{up}}( Z^*)\\
\mathcal{V}_{1}^0( Z^*) \leq 0\\
\mathcal{V}_2^{\text{lo}}(( V^T Y)_1,  Z^*) \leq ( V^T Y)_2 \leq \mathcal{V}_2^{\text{up}}(( V^T Y)_1,  Z^*)\\
\mathcal{V}_{2}^0(( V^T Y)_1,  Z^*) \leq 0\\
\vdots\\
\mathcal{V}_{P-2}^{\text{lo}}(( V^T Y)_{1:P-3},  Z^*) \leq ( V^T Y)_{P-2} \leq \mathcal{V}_{P-2}^{\text{up}}(( V^T Y)_{1:P-3},  Z^* )\\
\mathcal{V}_{P-2}^0(( V^T Y)_{1:P-3},  Z^*) \leq 0
    \end{Bmatrix}$$

Then we have 
\begin{align*}
    \Pi_{P-1}(\mathcal{P}) &= \begin{Bmatrix}
     A_{P-1} ( V^T Y)_{1:P-1} \geq  b_{P-1}( Z^*)\\
    \end{Bmatrix}\\
    &= \begin{Bmatrix}
     A_{P-1} ( V^T Y)_{1:P-1} \geq  b_{P-1}( Z^*)\\
   \mathcal{V}_{1}^{\text{lo}}( Z^*) \leq ( V^T  Y)_{1} \leq \mathcal{V}_{1}^{\text{up}}( Z^*)\\
\mathcal{V}_{1}^0( Z^*) \leq 0\\
\mathcal{V}_2^{\text{lo}}( Z^*, ( V^T Y)_1) \leq ( V^T Y)_2 \leq \mathcal{V}_2^{\text{up}}( Z^*, ( V^T Y)_1)\\
\mathcal{V}_{2}^0( Z^*, ( V^T Y)_1) \leq 0\\
\vdots\\
\mathcal{V}_{P-2}^{\text{lo}}(( V^T Y)_{1:P-3},  Z^*) \leq ( V^T Y)_{P-2} \leq \mathcal{V}_{P-2}^{\text{up}}(( V^T Y)_{1:P-3},  Z^* )\\
\mathcal{V}_{P-2}^0(( V^T Y)_{1:P-3},  Z^*) \leq 0
    \end{Bmatrix}\\
    &=\begin{Bmatrix}
     ( A_{P-1})_{i(P-1)} ( V^T Y)_{P-1} \geq -\sum_{j=1}^{P-2}( A_{P-1})_{ij}( V^T Y)_j+ ( b_{P-1}( Z^*))_i \quad \text{ for } i = 1, \ldots, \ell_{P-1}\\
   \mathcal{V}_{1}^{\text{lo}}( Z^*) \leq ( V^T  Y)_{1} \leq \mathcal{V}_{1}^{\text{up}}( Z^*)\\
\mathcal{V}_{1}^0( Z^*) \leq 0\\
\mathcal{V}_2^{\text{lo}}( Z^*, ( V^T Y)_1) \leq ( V^T Y)_2 \leq \mathcal{V}_2^{\text{up}}( Z^*, ( V^T Y)_1)\\
\mathcal{V}_{2}^0( Z^*, ( V^T Y)_1) \leq 0\\
\vdots\\
\mathcal{V}_{P-2}^{\text{lo}}(( V^T Y)_{1:P-3},  Z^*) \leq ( V^T Y)_{P-2} \leq \mathcal{V}_{P-2}^{\text{up}}(( V^T Y)_{1:P-3},  Z^* )\\
\mathcal{V}_{P-2}^0(( V^T Y)_{1:P-3},  Z^*) \leq 0
    \end{Bmatrix}\\
    &= \begin{Bmatrix}
    ( V^T Y)_{P-1} \geq \frac{-\sum_{j=1}^{P-2}( A_{P-1})_{ij}( V^T Y)_j+ ( b_{P-1}( Z^*))_i}{( A_{P-1})_{i(P-1)}} \quad \text{ for } i: ( A_{P-1})_{i(P-1)} > 0\\
    ( V^T Y)_{P-1} \leq \frac{-\sum_{j=1}^{P-2}( A_{P-1})_{ij}( V^T Y)_j+ ( b_{P-1}( Z^*))_i}{( A_{P-1})_{i(P-1)}} \quad \text{ for } i: ( A_{P-1})_{i(P-1)} < 0\\
    0 \geq -\sum_{j=1}^{P-2}( A_{P-1})_{ij}( V^T Y)_j+ ( b_{P-1}( Z^*))_i \quad \text{ for } i: ( A_{P-1})_{i(P-1)} = 0\\
    \mathcal{V}_{1}^{\text{lo}}( Z^*) \leq ( V^T  Y)_{1} \leq \mathcal{V}_{1}^{\text{up}}( Z^*)\\
\mathcal{V}_{1}^0( Z^*) \leq 0\\
\mathcal{V}_2^{\text{lo}}( Z^*, ( V^T Y)_1) \leq ( V^T Y)_2 \leq \mathcal{V}_2^{\text{up}}( Z^*, ( V^T Y)_1)\\
\mathcal{V}_{2}^0( Z^*, ( V^T Y)_1) \leq 0\\
\vdots\\
\mathcal{V}_{P-2}^{\text{lo}}(( V^T Y)_{1:P-3},  Z^*) \leq ( V^T Y)_{P-2} \leq \mathcal{V}_{P-2}^{\text{up}}(( V^T Y)_{1:P-3},  Z^* )\\
\mathcal{V}_{P-2}^0(( V^T Y)_{1:P-3},  Z^*) \leq 0
    \end{Bmatrix}.\\
    &= \begin{Bmatrix}
       \mathcal{V}_{1}^{\text{lo}}( Z^*) \leq ( V^T  Y)_{1} \leq \mathcal{V}_{1}^{\text{up}}( Z^*)\\
\mathcal{V}_{1}^0( Z^*) \leq 0\\
\mathcal{V}_2^{\text{lo}}(( V^T Y)_1,  Z^*) \leq ( V^T Y)_2 \leq \mathcal{V}_2^{\text{up}}(( V^T Y)_1,  Z^*)\\
\mathcal{V}_{2}^0(( V^T Y)_1,  Z^*) \leq 0\\
\vdots\\
\mathcal{V}_{P-1}^{\text{lo}}(( V^T Y)_{1:P-2},  Z^*) \leq ( V^T Y)_{P-1} \leq \mathcal{V}_{P-1}^{\text{up}}(( V^T Y)_{1:P-2},  Z^* )\\
\mathcal{V}_{P-1}^0(( V^T Y)_{1:P-2},  Z^*) \leq 0
    \end{Bmatrix}
\end{align*}
where 
\begin{align*}
    \mathcal{V}_{P-1}^{\text{lo}}\left(( V^T Y)_{1:P-2},  Z^*\right) &= \max_{i:( A_{P-1})_{i(P-1)} > 0}  \frac{-\sum_{j-1}^{P-2}( A_{P-1})_{ij}( V^T  Y)_j + ( b_{P-1}( Z^*))_i}{( A_{P-1})_{i(P-1)}} \\
  \mathcal{V}_{P-1}^{\text{up}}\left(( V^T Y)_{1:P-2},  Z^*\right) &= \min_{i:( A_{P-1})_{i(P-1)} < 0}  \frac{-\sum_{j-1}^{P-2}( A_{P-1})_{ij}( V^T  Y)_j + ( b_{P-1}( Z^*))_i}{( A_{P-1})_{i(P-1)}} \\
    \mathcal{V}_{P-1}^{\text{0}}\left(( V^T Y)_{1:P-2},  Z^*\right) &= \max_{\substack{\\ i:( A_{P-1})_{i(P-1)} = 0 \\}}  -\sum_{j=1}^{P-2} ( A_{P-1})_{ij}( V^T Y)_j + ( b_{P-1}( Z^*))_i.
\end{align*}

Therefore, we conclude that 
\begin{align*}
    \Pi_P(\mathcal{P}) &= \left\{ \Gamma  Y \geq  0\right\}\\
    &= \begin{Bmatrix}
       \mathcal{V}_{1}^{\text{lo}}( Z^*) \leq ( V^T  Y)_{1} \leq \mathcal{V}_{1}^{\text{up}}( Z^*)\\
\mathcal{V}_{1}^0( Z^*) \leq 0\\
\mathcal{V}_2^{\text{lo}}(( V^T Y)_1,  Z^*) \leq ( V^T Y)_2 \leq \mathcal{V}_2^{\text{up}}(( V^T Y)_1,  Z^*)\\
\mathcal{V}_{2}^0(( V^T Y)_1,  Z^*) \leq 0\\
\vdots\\
\mathcal{V}_{P-1}^{\text{lo}}(( V^T Y)_{1:P-2},  Z^*) \leq ( V^T Y)_{P-1} \leq \mathcal{V}_{P-1}^{\text{up}}(( V^T Y)_{1:P-2},  Z^* )\\
\mathcal{V}_{P-1}^0(( V^T Y)_{1:P-2},  Z^*) \leq 0\\
\mathcal{V}_{p}^{\text{lo}}(( V^T Y)_{1:P-1},  Z^*) \leq ( V^T Y)_{p} \leq \mathcal{V}_{p}^{\text{up}}(( V^T Y)_{1:P-1},  Z^*)\\
\mathcal{V}_{p}^0(( V^T Y)_{1:P-1},  Z^*) \leq 0\\
    \end{Bmatrix}
\end{align*}
where 
\begin{align*}
    \mathcal{V}_P^{\text{lo}}\left(( V^T Y)_{1:P-1},  Z^*\right) &= \max_{i:( A_P)_{ip} > 0}  \frac{-\sum_{j-1}^{P-1}( A_P)_{ij}( V^T  Y)_j + ( b_P( Z^*))_i}{( A_P)_{ip}} \\
    \mathcal{V}_P^{\text{up}}\left(( V^T  Y)_{1:P-1},  Z^*\right) &= \min_{i:( A_P)_{ip} < 0}  \frac{-\sum_{j-1}^{P-1}( A_P)_{ij}( V^T  Y)_j + ( b_P( Z^*))_i}{( A_P)_{ip}}\\
    \mathcal{V}_P^{\text{0}}\left(( V^T Y)_{1:P-1},  Z^*\right) &= \max_{\substack{\\ i:( A_P)_{ip} = 0 \\}}  -\sum_{j=1}^{P-1} ( A_P)_{ij}( V^T Y)_j + ( b_P( Z^*))_i.
\end{align*}
\end{proof}
\end{section}

\clearpage

\begin{funding}
This work was supported by the NIH grant 5T32CA009337-40 (Shyr), NSF grants DMS1810829 and DMS2113707 (Parmigiani and Patil), DMS2113426 (Sur) and a William F. Milton Fund (Sur). 
\end{funding}

\begin{code}
Code to reproduce results from the simulations and data application can be found at \texttt{https://github.com/wangcathy/multi-study-boosting}.
\end{code}

\clearpage
\bibliographystyle{imsart-nameyear} 
\bibliography{bibliography}  
\end{document}